\newtheorem{theorem}{Theorem}
\newtheorem{lemma}[theorem]{Lemma}
\newtheorem{proposition}[theorem]{Proposition}
\newtheorem{definition}{Definition}[section]
\theoremstyle{remark}
\newtheorem*{remark}{Remark}
\DeclareMathOperator*{\argmax}{argmax} 
\newcommand{\guillaume}[1]{\todo[inline,color=blue!40,caption={}]{{\it Guillaume:~}#1}}
\begin{document}

\title{On Overfitting and Asymptotic Bias in Batch Reinforcement 
		Learning with Partial Observability}

\author{\name Vincent Fran\c{c}ois-Lavet \email vincent.francois-lavet@mail.mcgill.ca \\
       \name Guillaume Rabusseau \email guillaume.rabusseau@mail.mcgill.ca \\
       \name Joelle Pineau \email jpineau@cs.mcgill.ca \\
       \addr McGill University
       \AND
       \name Damien Ernst \email dernst@uliege.be \\
       \name Raphael Fonteneau \email raphael.fonteneau@uliege.be  \\
       \addr University of Liege}


\maketitle

\begin{abstract}
This paper provides an analysis of the tradeoff between asymptotic bias (suboptimality with unlimited data) and overfitting (additional suboptimality due to limited data) in the context of reinforcement learning with partial observability.
Our theoretical analysis formally characterizes that while potentially increasing the asymptotic bias, a smaller state representation decreases the risk of overfitting.
This analysis relies on expressing the quality of a state representation by bounding $L_1$ error terms of the associated belief states. 
Theoretical results are empirically illustrated when the state representation is a truncated history of observations, both on synthetic POMDPs and on a large-scale POMDP in the context of smartgrids, with real-world data.
Finally, similarly to known results in the fully observable setting, we also briefly discuss and empirically illustrate how using function approximators and adapting the discount factor may enhance the tradeoff between asymptotic bias and overfitting in the partially observable context.
\end{abstract}

\section{Introduction}
\label{intro}


This paper studies sequential decision-making problems that may be modeled as Markov Decision Processes (MDP) but for which the state is partially observable. This class of problems is called Partially Observable Markov Decision Processes (POMDPs) \cite{sondik1978optimal}.
Within this setting, we focus on decision-making strategies computed using Reinforcement Learning (RL). 
When the model of the environment is not available, RL approaches rely on observations gathered through interactions with the (PO)MDP, and, although some RL approaches have strong convergence guarantees, classic RL approaches are challenged by data scarcity. When acquisition of new observations is possible (the ``online'' case), data scarcity is gradually phased out using strategies balancing the exploration / exploitation (E/E) tradeoff. The scientific literature related to this topic is vast; in particular, Bayesian RL techniques  \cite{ross2011bayesian,ghavamzadeh2015bayesian} offer an elegant way of formalizing the E/E tradeoff. 


  
However, such E/E strategies are not applicable when the acquisition of new observations is not possible anymore. In the pure ``batch'' setting, the task is to learn the best possible policy from a fixed set of transition samples \cite{farahmand2011regularization,lange2012batch}.
Within this context, we propose to revisit RL as a learning paradigm that faces, similarly to supervised learning, a tradeoff between simultaneously minimizing two sources of error: an asymptotic bias and an overfitting error. 
The asymptotic bias (also simply called bias in the following) directly relates to the choice of the RL algorithm (and its parameterization). Any RL algorithm defines a policy class as well as a procedure to search within this class, and the bias may be defined as the performance gap between actual optimal policies and the best policies within the policy class considered. This bias does not depend on the set of observations. On the other hand, overfitting is an error term induced by the fact that only a limited amount of data is available to the algorithm.
This overfitting error vanishes as the size and the quality of the dataset increase.

In this paper, we focus on studying the interactions between these two sources of error, in a setting where the state is partially observable. Due to this particular setting, one needs to build a state representation from the sequence of observations, actions and rewards in a trajectory \cite{singh1994learning,aberdeen2003revised}.
By increasing the cardinality of the state representation, the algorithm may be provided with a more informative representation of the POMDP, but at the price of simultaneously increasing the size of the set of candidate policies, thus also increasing the risk of overfitting.
We analyze this tradeoff in the case where the RL algorithm provides an optimal solution to the frequentist-based MDP associated with the state representation (independently of the method used by the learning algorithm to converge towards that solution).
Our novel analysis relies on expressing the quality of a state representation by bounding $L_1$ error terms of the associated belief states, thus introducing the concept of $\epsilon$-sufficient statistics in the hidden state dynamics. 

Experimental results illustrate the theoretical findings on a distribution of synthetic POMDPs as well as a large-scale POMDP with real-world data. 
In addition, we illustrate the link between the variance observed when dealing with different datasets (directly linked to the size of the dataset) and overfitting, where the link is that variance leads to overfitting if we have a (too) large feature space.

By extending known results for MDPs, we also briefly discuss and illustrate how using function approximators and adapting the discount factor play a role in the tradeoff between bias and overfitting when the state is partially observable.
This has the advantage of providing the reader with an overview of key elements involved in the bias-overfitting tradeoff, specifically for the POMDP case.

The remainder of the paper is organized as follows. Section \ref{formalization} formalizes POMDPs, (limited) sets of observations and state representations. Section \ref{bias-over} details the main contribution of this paper: an analysis of the bias-overfitting tradeoff in learning POMDPs in the batch setting.
Section \ref{exp} empirically illustrates the main theoretical results, while Section \ref{conclusion} concludes with a discussion of the findings

\section{Formalization}
\label{formalization}

We consider a discrete-time POMDP \cite{sondik1978optimal} model $M$ described by the tuple $(\mathcal S,\mathcal A,T,R,\Omega,O,\gamma)$ where
\begin{itemize}
\item $\mathcal S$ is a finite set of states $\{1, \ldots, N_{\mathcal S}\}$,
\item $\mathcal A$ is a finite set of  actions $\{1, \ldots, N_{\mathcal A}\}$,
\item $T: \mathcal S \times \mathcal A \times \mathcal S \to [0,1]$ is the transition function (set of conditional transition probabilities between states),
\item $R: \mathcal S \times \mathcal A \times \mathcal S \to \mathcal R$ is the reward function, where $\mathcal R$ is a continuous set of possible rewards in a range $R_{max} \in \mathbb{R}^+$ (e.g., $[0,R_{max}]$ without loss of generality),
\item $\Omega$ is a finite set of observations $\{1, \ldots, N_{\Omega}\}$\label{ntn:N_Omega},
\item $O: \mathcal S \times \Omega \to [0,1]$\label{ntn:cond_obs} is a set of conditional observation probabilities, and
\item $\gamma \in [0, 1)$ is the discount factor.
\end{itemize}

The initial state is drawn from an initial distribution $b(s_0)$.
At each time step $t \in \mathbb N_0$, the environment is in a state $s_t \in \mathcal S$. At the same time, the agent receives an observation $\omega_t \in \Omega$ which depends on the state of the environment with probability $O(s_t, \omega_t)$ and the agent has to take an action $a_t \in \mathcal A$.
Then, the environment transitions to state $s_{t+1} \in \mathcal S$ with probability $T(s_t,a_t,s_{t+1})$ and the agent receives a reward  $r_t \in \mathcal R$ equal to $R(s_t, a_t, s_{t+1})$. 
In this paper, the conditional transition probabilities $T$, the reward function $R$ and the conditional observation probabilities $O$ are unknown. 
The only information available to the agent is the past experience it gathered while interacting with the POMDP. A POMDP is illustrated in Fig. \ref{fig:POMDP}.

\begin{figure}[ht!]
\centering
\def\dist{5}
\def\ra{0.9}
\begin{tikzpicture}[->,thick, scale=0.8]
\scriptsize
\tikzstyle{main}=[circle, minimum size = \ra, thick, draw =black!80, node distance = 12mm]
\tikzstyle{rr}=[rounded rectangle, rounded rectangle west arc=5pt, rounded rectangle east arc=50pt, minimum size = 7mm, thick, draw =black!80, node distance = 12mm]

\foreach \name in {0,...,2}
    \node[main, fill = white!100] (s\name) at (\dist*\name,0) {$s_\name$};
\foreach \name in {0,...,2}
    \node[main, fill = white!100] (omega\name) at (\dist*\name,-2) {$\omega_\name$};
\foreach \name in {0,...,2}
    \node[main, color=blue] (H\name) at (\dist*\name,-5) {$H_\name$};
\foreach \name in {0,...,2}
    \node[main, fill = white!100] (a\name) at (\dist*\name+1,-3) {$a_\name$};
\foreach \name in {0,...,1}
    \node[main, fill = white!100] (r\name) at (\dist*\name+4,-3) {$r_\name$};
\foreach \name in {0,...,1}{
    \node[] (tb\name) at (\dist*\name+2,0) {};
    \node[] (tf\name) at (\dist*\name+3,0) {};
}

\node[font=\Large] (dots) at (\dist*2+2,-2.5) {\ldots};

\draw [dashed,-] (-1,1) -- (11,1);
\draw [dashed,-] (-1,-1) -- (11,-1);

\foreach \name in {0,...,1}
    \draw [] plot [smooth, tension=2] coordinates { (a\name.north) (\dist*\name+\dist/2,0) (r\name.north) };

\foreach \cur/\next in {0/1,1/2}
       {
        \path (s\cur) edge (s\next);
        \path (s\cur) edge (omega\cur);
        \path [blue] (omega\cur) edge (H\cur);
        \path [blue] (H\cur) edge (H\next);
        \path [black!60!green] (H\cur) edge (a\cur);
        \node [black!60!green] (pol\cur) at (\dist*\cur+1.2,-4) {Policy};
        \path [blue] (r\cur) edge (H\next);
        \path [blue] (a\cur) edge (H\next);
       }
        \path (s2) edge (omega2);
        \path [blue] (omega2) edge (H2);
        \node [text width=2cm, align=center] (hid) at (11.5,0) {Hidden dynamics};

        \path [black!60!green] (H2) edge (a2);
        \node [black!60!green] (pol2) at (\dist*2+1.2,-4) {Policy};

\end{tikzpicture}
\caption{Graphical model of a POMDP.}
\label{fig:POMDP}
\end{figure}
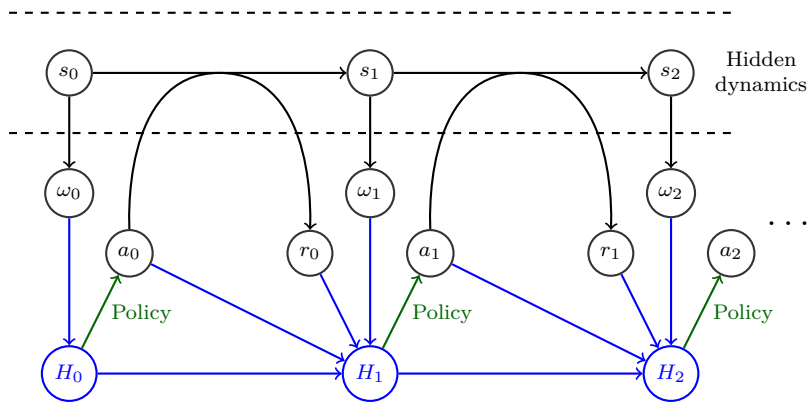

\subsection{Processing a History of Data}

Policies considered in this paper are mappings from (an ordered set of) observation(s) into actions.
A simple approach to build a space of candidate policies is to consider the set of mappings taking only the very last observation(s) as input \cite{whitehead1990active}. However, in a POMDP setting, this leads to candidate policies that are likely not rich enough to capture the system dynamics, thus suboptimal \cite{singh1994learning,wolfe2006pomdp}.
The alternative is to use a history of previously observed features to better estimate the hidden state dynamics \cite{mccallum1996reinforcement,littman2002predictive,singh2004predictive}. 
We denote by $\mathcal H_t=\Omega \times (\mathcal A \times \mathcal R \times \Omega)^{t}$ the set of histories observed up to time $t$ for $t \in \mathbb N_0$,  and by  $\mathcal H=\bigcup\limits_{t=0}^{\infty} \mathcal H_{t}$ the space of all possible observable histories.

A straightforward approach is to take the whole history $H_{t} \in \mathcal H$ as input of candidate policies. 
However, taking a too long history may have several drawbacks. 
Indeed, increasing the size of the set of candidate optimal policies generally implies: (i)~more computation to search within this set \cite{littman1994memoryless,mccallum1996reinforcement} and (ii)~an increased risk of including candidate policies suffering overfitting (see Section \ref{bias-over}).
In this paper, we are specifically interested in minimizing the latter overfitting drawback while keeping an informative state representation.


In this paper, we consider a mapping $\phi : \mathcal H \rightarrow \phi(\mathcal H)$, where $\phi(\mathcal H) = \{ \phi(H) | H \in \mathcal H   \} $ is of finite cardinality $|\phi(\mathcal H)|$.
On the one hand, we will show that when $\phi$ discards information from the whole history, the state representation $\phi(H)$ that the agent uses to take decision might depart from sufficient statistics, which can hurt performance.
On the other hand, we will show that it is beneficial to use a mapping $\phi$ that has a low cardinality $|\phi(\mathcal H)|$ to avoid overfitting. 
This can be intuitively understood since a mapping $\phi(\cdot)$ induces an upper bound on the number of candidate policies:
$|\Pi_{\phi(\mathcal H)}|~\le~|\mathcal A|^{|\phi(\mathcal H)|}$. 
In the following, we discuss this tradeoff formally.

Let us first introduce a notion of information on the latent hidden state $s$ through the notion of belief state \cite{cassandra1994acting}.

\begin{definition}
\label{belief_state}
The belief state $b(s | H)$ is defined as the vector of probabilities where the $i^{th}$ component ($i \in \{1, \ldots, N_{\mathcal S}\}$) is given by $\mathbb{P}(s=i \mid H)$, for any history $H \in \mathcal H$.
\end{definition}

\begin{definition}
The belief state $b_{\phi}\left(s | \phi(H)\right)$ is defined as the vector of probabilities where the $i^{th}$ component ($i \in \{1, \ldots, N_{\mathcal S}\}$) is given by $\mathbb{P}(s=i \mid \phi(H))$, for any history $H \in \mathcal H$
\footnote{Note that $s$ and $H$ are random variables and their exact distribution will depend on the context that is considered. 
For any given probability distribution $\mathcal D_H$ over histories: $H \sim \mathcal D_H$, the probability $P(s|\phi(H))$ is the expectation of the state when $\phi(H)$ is observed:
$b_\phi(s \mid \varphi)=\underset{H \sim \mathcal D_H, \varphi=\phi(H)}{\mathbb{E}} b(s \mid H)$.
}.
\end{definition}

Among all possible mappings $\phi$, we are particularly interested in the ones that extract enough information from the history to accurately 
capture the corresponding  belief state, with the notion of sufficient statistics \cite{kaelbling1998planning,aberdeen2007policy}. We thus define this notion in the context of the mapping $\phi$.
\begin{definition}
\label{sufficient_stat}
In a POMDP $M$, a statistic $\phi(H)$ is a sufficient statistic at the condition that $\forall s \in \mathcal S$:
$$\mathbb{P}(s \mid H) = \mathbb{P}(s \mid \phi(H)),$$
for $H \in \mathcal H$.
A mapping $\phi$ which provides sufficient statistics for all histories $H \in \mathcal H$ is called a sufficient mapping and is denoted as $\phi_0$.
\end{definition}

However, as mentioned previously, it may happen that a mapping $\phi$ does not capture enough information from a history to contain the same information than the belief state. 
One of the key notions on which our analysis relies is the one of ``approximately sufficient mappings'', i.e. mappings whose corresponding belief state lies in an $L_1$-ball of radius $\epsilon$ centered on $b(\cdot | H)$:  

\begin{definition}
\label{eps-sufficient_stat}
In a POMDP $M$, a statistic $\phi(H)$ is an $\epsilon$-sufficient statistic at the condition that it meets the following condition with $\epsilon \ge 0$ and with the $L_1$ norm:
$$\left\lVert b_{\phi}(\cdot | \phi(H)) - b(\cdot | H) \right\lVert_1  \le \epsilon,$$
for $H \in \mathcal H$.
A mapping $\phi$ that provides $\epsilon$-sufficient statistics for all histories $H \in \mathcal H$ is called an $\epsilon$-sufficient mapping and is denoted as $\phi_\epsilon(H)$.
\end{definition}

 

\subsection{Working with a Limited Dataset}

Let  $\mathcal M(\mathcal S,\mathcal A,\Omega, \gamma)$ be a set of POMDPs with fixed $\mathcal S$, $\mathcal A$, $\Omega$, and $\gamma$.
For any POMDP $M(T,R,O) \in \mathcal M$, we denote by $D_{M,\pi_s, N_{tr}, N_l}$ a random dataset generated according to a probability distribution $\mathcal D_{M,\pi_s, N_{tr}, N_l}$ over the set of $N_{tr}$ trajectories of length $N_l$. One such trajectory is defined as the observable history $H_{N_l} \in \mathcal H_{N_l}$ obtained in $M$ when starting from $s_0$ and following a stochastic sampling policy $\pi_s$ that ensures a non-zero probability of taking any action given an observable history $H \in \mathcal H$. For simplicity we denote $D_{M,\pi_s, N_{tr}, N_l}$, simply as $D \sim \mathcal D_M$.
For the purpose of the analysis, we also introduce the asymptotic dataset  $D_\infty = D_{M,\pi_s, N_{tr} \rightarrow \infty, N_{l} \rightarrow \infty} $ that would be theoretically obtained in the case where one could generate an infinite number of observations ($N_{tr} \rightarrow \infty$ and $N_{l} \rightarrow \infty$). 

In this paper, the algorithm cannot generate additional data. The challenge is to determine a high-performance policy (in the actual environment) while having only access to a fixed dataset $D$. We formalize this hereafter.

\subsection{Assessing the Performance of a Policy}

Let us consider stationary and deterministic control policies $\pi: \phi(\mathcal H) \rightarrow \mathcal A$ with $\pi \in \Pi$. Any particular choice of $\phi$ induces a particular definition of the policy space $\Pi$.
We introduce $V_{M}^\pi(\phi(H))$ with $H \in \mathcal H$ as the expected return obtained over an infinite time horizon when the system is controlled using policy $\pi$ 
in the POMDP $M$. For any given distribution $\mathcal D_H$ over histories, this is defined as:
\begin{align*}
V^\pi_M(\phi(H))=\underset{H' \sim \mathcal D_H:\ \atop \phi(H') = \phi(H)}{\mathbb{E}} V^\pi_M(H' \mid \phi),
\label{eq:expected_return}
\end{align*}
with $V^\pi_M(H \mid \phi)$ given by
\begin{align*}
V_{M}^\pi(H \mid \phi)=\mathbb{E} \left[\sum_{t=0}^{\infty} \gamma^{t} r_{t} | s_0 \sim b(\cdot|H), \pi \right],
\end{align*}
where we have $\mathbb P \big(\omega_t \mid s_t \big)=O(s_t,\omega_t)$, $a_t = \pi(\phi(H_t))$, $\mathbb P \big( s_{t+1} | s_{t}, a_t \big) = T(s_{t},a_t, s_{t+1})$ and $r_{t} = R \big(s_{t},a_t, s_{t+1} \big)$.

We also define $\pi^*$ as an optimal policy in $M$:
$$\pi^* \in \underset{\pi:\phi_0(\mathcal H) \rightarrow \mathcal A}{\argmax} \ 
 V_{M}^\pi(\phi_0(H_0)),$$
where $H_0$ 
is taken out of the distribution of initial observations (compatible with the distribution $b(s_0)$ of initial states through the conditional observation probabilities).

\section{Bias-overfitting in RL with Partial Observability}
\label{bias-over}


In this section, we study the performance difference (or gap) between the expected return that can be obtained following the policy built from limited data and the highest possible expected return that we would obtain if the algorithm had access to the POMDP parameters. In particular, we analyze how this performance gap can be decomposed into the sum of two terms: a term related to an asymptotic bias (suboptimality with unlimited data) and a term due to overfitting (additional suboptimality due to limited data).

\subsection{Importance of the Feature Space}
To study the importance of the feature space, let us assume that the policies built from limited data are optimal according to frequentist statistics, 
which allows removing from the analysis how the RL algorithm converges. 
In order to define the optimal policy according to frequentist statistics, let us first introduce a frequentist-based (augmented)  
MDP from the dataset $D$:
\begin{definition}
\label{augmented_DP}
With $M$ defined by $(S,A,T,R,\Omega,O,\gamma)$ and the dataset $D$ built from interactions with $M$, the frequentist-based augmented 
MDP $\hat {M}_{D, \phi}$, also denoted for simplicity $\hat {M}_{D}=(\mathrm{\Sigma},\mathrm{A},\hat{T},\hat{R},\mathrm{\Gamma})$, is defined with
\begin{itemize}
\item the state space: $\Sigma= \phi(\mathcal H)$\label{ntn:Sigma},
\item the action space: $\mathrm{A}=\mathcal A$\label{ntn:mathrmA},
\item the estimated transition function: for $\sigma,\sigma' \in \Sigma$ and $a \in \mathrm{A}$, $\hat{T}(\sigma,a,\sigma')$\label{ntn:hat_T} is the number of times we observe the transition $(\sigma, a) \rightarrow \sigma'$ divided by the number of times we observe $(\sigma,a)$ \footnote{if any $(\sigma,a)$ has never been encountered in a dataset, we arbitrarily set $\hat{T}(\sigma, a, \sigma') = 1/|\Sigma|, \forall \sigma'$. The theoretical results that follow are independent of how this case is treated.},
\item the estimated reward function: for $\sigma,\sigma' \in \Sigma$ and $a \in \mathrm{A}$, $\hat{R}(\sigma,a,\sigma')$\label{ntn:hat_R} is the mean of the rewards observed for the tuple $(\sigma, a, \sigma')$ \footnote{if any $(\sigma,a, \sigma')$ has never been encountered in a dataset, we arbitrarily set $\hat{R}(\sigma, a, \sigma')$ to the average of rewards observed over the whole dataset D. The theoretical results that follow are independent of how this case is treated.}, and
\item the discount factor $\Gamma \le \gamma$\label{ntn:Gamma}. 
\end{itemize}
\end{definition}

As long as the mapping $\phi$ is a sufficient mapping (thus denoted $\phi_0$), the asymptotic frequentist-based MDP (when unlimited data is available) actually gathers the relevant information from the actual POMDP. Indeed, when the POMDP is known (i.e. $T, R, O$ are known), the knowledge of $H_t$ allows one to obtain the belief state $b(s_t | H_t)$, calculated recursively 
thanks to the Bayes rule based on $b(s_t | H_t) = \mathbb P \left(s_{t}|\omega_t,a_t,b(s_{t-1} | H_{t-1})\right)$.
It is then possible to define, from the history $H \in \mathcal H$ and for any action $a \in \mathcal A$, the expected immediate reward as well as a transition function into the next observation $\omega'$:
\begin{equation*}
\begin{split}
R_{model-based} (H,a)= \sum_{s' \in \mathcal S} \sum_{s \in \mathcal S} b(s | H) T(s,a,s') R(s,a,s') \text{, and}\\
T_{model-based} (H,a,\omega')= \sum_{s' \in \mathcal S} \underbrace{ \sum_{s \in \mathcal S} b(s | H) T(s,a,s')}_\text{next belief state
} O\big( s', \omega' \big).
\label{ntn:T_model_based}
\end{split}
\end{equation*}
In the frequentist approach, this information is estimated directly from interactions with the POMDP in $\hat R$ and $\hat T$ without any explicit knowledge of the POMDP parameters.


We introduce 
$\mathcal V_{\hat {M}_{D}}^{\pi}(\sigma)$ with $\sigma \in \Sigma$
as the expected return obtained over an infinite time horizon when the system is controlled using a policy $\pi:\Sigma \rightarrow \mathrm A$ in the augmented decision process $\hat {M}_{D}$: 
$$\mathcal V_{\hat {M}_{D}}^\pi(\sigma)=\mathbb{E} \left[\sum_{t=0}^{\infty} \Gamma^{k} \hat r_{t} | \sigma_0=\sigma, \pi  \right],$$
where 
$\hat r_t$ is the reward s.t. $\hat r_t = \hat R(\sigma_t,a_t,\sigma_{t+1})$ and the transition is defined by $\mathbb P(\sigma_{t+1}|\sigma_t, a_t)=\hat T(\sigma_t, a_t, \sigma_{t+1})$. 

A policy $\pi$ is defined to be better than or equal to a policy $\pi'$ if its expected return is greater than or equal to that of $\pi'$ for all states. In an MDP, there is always at least one policy that is better than or equal to all other policies and this is an optimal policy \cite{sutton1998reinforcement}.
In the augmented
MDP $\hat {M}_{D}$, we denote the optimal policy as $\pi_{D, \phi}$ 
and we also call it the frequentist-based policy.
Let us now decompose the error of using a frequentist-based policy $\pi_{D, \phi}$ in the actual POMDP:

\begin{equation}
\begin{split}
 \underset{D\sim \mathcal D_M}{\mathbb E} \left[ V_M^{\pi^*}(\phi_0(H)) -V_M^{\pi_{D, \phi}}(\phi(H)) \right] =& \underbrace{\left(V_M^{\pi^*}(\phi_0(H))-V_M^{\pi_{D_\infty, \phi}}(\phi(H))\right)}_\text{\shortstack{bias function of dataset $D_\infty$ (function of $\pi_s$)\\
and frequentist-based policy $\pi_{D_\infty, \phi}$ (function of $\phi$ and $\Gamma$)}} \\
& + \underbrace{ \underset{D\sim \mathcal D_M}{\mathbb E} \left[ V_M^{\pi_{D_\infty, \phi}}(\phi(H))-V_M^{\pi_{D, \phi}}(\phi(H))\right]}_\text{\shortstack{overfitting due to finite dataset D (function of $\pi_s$, $N_l$, $N_{tr}$)\\ in the context of frequentist-based policy $\pi_{D, \phi}$ \\(function of $\phi$ and $\Gamma$)} }.
\end{split}
\label{bias-overfitting}
\end{equation}

The term \textit{bias} actually refers to an asymptotic bias when the size of the dataset tends to infinity, while the term \textit{overfitting} refers to the expected suboptimality due to the finite size of the dataset (and thus due to the variance in the estimated transition function and reward function). 

Selecting the feature space $\phi(\mathcal H)$ carefully allows building a class of policies that have the potential to accurately capture information from data (low bias), but also generalize well (low overfitting). 
On the one hand, using too many non-informative features will increase overfitting, as stated in Theorem \ref{union_bound} below.
On the other hand, a mapping $\phi(\mathcal H)$ that discards useful available information will suffer an asymptotic bias, as stated in Theorem \ref{theorem_bias} below (arbitrarily large depending on the POMDP and on the features discarded). 

We start by providing a bound on the bias, which is an original result based on the belief states via the $\epsilon$-sufficient statistic.

\begin{theorem}
\label{theorem_bias}
\textbf{"Bound on the bias":}
Let $M$ be a POMDP described by the 7-tuple $(\mathcal S,\mathcal A,T,R,\Omega,O,\gamma)$. 
Let $\hat M_{D_\infty}$ be an augmented MDP $(\Sigma,\mathrm{A},\hat{T},\hat{R},\Gamma=\gamma)$ estimated, according to Definition \ref{augmented_DP}, from a dataset $D_\infty$. 
Then, for any $\epsilon$-sufficient mapping $\phi=\phi_\epsilon$, the asymptotic bias can be bounded as follows:
\begin{equation}
\begin{aligned}
\underset{H \in \mathcal H}{\max} \bigl( V_M^{\pi^*}(\phi_0(H))-V_M^{\pi_{D_\infty, \phi}}(\phi(H))\bigl)
\le \frac{2 \epsilon R_{max}}{(1-\gamma)^3}.
\end{aligned}
\label{theorem}
\end{equation}
\end{theorem}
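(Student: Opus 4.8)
The plan is to turn the $\epsilon$-closeness of the belief states (Definition \ref{eps-sufficient_stat}) into a one-step reward/transition discrepancy between the true belief-MDP and the asymptotic frequentist MDP $\hat M_{D_\infty}$, and then push this through a simulation-style value-perturbation argument followed by a greedy-policy-loss argument. The guiding observation is that for a sufficient mapping $\phi_0$ the asymptotic frequentist MDP coincides with the belief-MDP, so its optimal value is exactly $V_M^{\pi^*}(\phi_0(\cdot))$; the whole difficulty is to control how replacing the exact belief $b(\cdot\mid H)$ by the representation belief $b_\phi(\cdot\mid\phi(H))$ — which differ by at most $\epsilon$ in $L_1$ — degrades this value.

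First I would establish the two one-step perturbation bounds. Using $R_{model-based}(H,a)=\sum_{s,s'}b(s\mid H)T(s,a,s')R(s,a,s')$ together with $\sum_{s'}T(s,a,s')R(s,a,s')\le R_{max}$, the $L_1$--$L_\infty$ (H\"older) inequality gives $\lvert R_{model-based}^{b}(H,a)-R_{model-based}^{b_\phi}(\phi(H),a)\rvert\le R_{max}\,\lVert b(\cdot\mid H)-b_\phi(\cdot\mid\phi(H))\rVert_1\le \epsilon R_{max}$. Similarly, writing $T_{model-based}(H,a,\omega')=\sum_{s'}\big(\sum_s b(s\mid H)T(s,a,s')\big)O(s',\omega')$ and using that $T(s,a,\cdot)$ and $O(s',\cdot)$ are probability vectors, the $L_1$ distance between the next-observation laws built from $b$ and from $b_\phi$ is at most $\lVert b(\cdot\mid H)-b_\phi(\cdot\mid\phi(H))\rVert_1\le\epsilon$. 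Thus $\hat M_{D_\infty}$ is a model of the belief-MDP with reward error $\Delta_R\le\epsilon R_{max}$ and $L_1$ transition error $\Delta_T\le\epsilon$.

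Next I would propagate these into the optimal value functions. Both the true Bellman optimality operator and that of $\hat M_{D_\infty}$ are $\gamma$-contractions in $\|\cdot\|_\infty$ (recall $\Gamma=\gamma$ here), and for any bounded $V$ with $\|V\|_\infty\le V_{max}=R_{max}/(1-\gamma)$ the two operators differ pointwise by at most $\Delta_R+\gamma V_{max}\Delta_T$. The standard fixed-point comparison then yields $\lVert Q^*-\hat Q^*\rVert_\infty\le\delta:=\frac{\Delta_R+\gamma V_{max}\Delta_T}{1-\gamma}=\frac{\epsilon R_{max}}{1-\gamma}+\frac{\gamma\epsilon R_{max}}{(1-\gamma)^2}$, where $\hat Q^*$ is the optimal action-value of $\hat M_{D_\infty}$. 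Finally, since $\pi_{D_\infty,\phi}$ is greedy with respect to $\hat Q^*$, the usual greedy-policy-loss inequality gives $V_M^{\pi^*}(\phi_0(H))-V_M^{\pi_{D_\infty,\phi}}(\phi(H))\le\frac{2\delta}{1-\gamma}$. Substituting $\delta$ and putting the two terms over the common denominator $(1-\gamma)^3$, the numerator collapses to $2\epsilon R_{max}\big[(1-\gamma)+\gamma\big]=2\epsilon R_{max}$, which is exactly the claimed bound.

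The main obstacle is not this (pleasingly telescoping) arithmetic but the state-space mismatch behind it: $Q^*$ lives on $\phi_0(\mathcal H)$, i.e.\ on exact beliefs, whereas $\hat Q^*$ lives on $\phi_\epsilon(\mathcal H)$, and, more delicately, the distribution over histories generated while deploying $\pi_{D_\infty,\phi}$ need not match the sampling distribution $\mathcal D_H$ (induced by $\pi_s$) under which $b_\phi$ and $\hat M_{D_\infty}$ are defined. I would handle this by exploiting that $\epsilon$-sufficiency holds uniformly over \emph{all} $H\in\mathcal H$ (Definition \ref{eps-sufficient_stat}): the per-history bound $\lVert b_\phi(\cdot\mid\phi(H))-b(\cdot\mid H)\rVert_1\le\epsilon$ makes the one-step reward and transition errors hold pointwise, hence robustly to any change of measure, so the contraction argument can be run along the belief trajectory actually induced by $\pi_{D_\infty,\phi}$ through the correspondence $\sigma\mapsto b_\phi(\cdot\mid\sigma)$. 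Making this coupling fully rigorous — rather than the value-perturbation estimates themselves — is the delicate part of the proof.
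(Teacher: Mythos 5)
Your proposal has the right architecture (one-step model error $\to$ value perturbation $\to$ greedy policy loss), and the constants do telescope to $\frac{2\epsilon R_{max}}{(1-\gamma)^3}$, but the central step --- the ``standard fixed-point comparison'' yielding $\lVert Q^*-\hat Q^*\rVert_\infty\le\delta$ --- does not go through as stated, and you have mislocated the difficulty. The problem is not the change of measure from $\pi_s$ to the deployed policy; it is that $Q^*$ and $\hat Q^*$ live on state spaces whose induced partitions of $\mathcal H$ are not consistent under the dynamics. Two histories $H,H''$ with $\phi_\epsilon(H)=\phi_\epsilon(H'')$ can, after the same action $a$ and observation $\omega'$, land in \emph{different} cells $\phi_\epsilon(Har\omega')\neq\phi_\epsilon(H''ar\omega')$. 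The abstract transition $\hat T(\varphi,a,\cdot)$ is the cluster-average of the push-forwards $\omega'\mapsto\phi_\epsilon(H''ar\omega')$, and since the push-forward map depends on $H''$ and not only on $\varphi=\phi_\epsilon(H'')$, the average of the push-forwards is not the push-forward of the averaged belief. Consequently your bound $\Delta_T\le\epsilon$ is valid only for the next-\emph{observation} marginal, not for the distribution over next \emph{abstract states} that actually enters the abstract Bellman operator; when you try to close the contraction you must compare the value function at $\phi_\epsilon(Har\omega')$ and at $\phi_\epsilon(H''ar\omega')$, a quantity that the definition of $\epsilon$-sufficiency does not directly control and that re-introduces the unknown $\lVert Q^*-\hat Q^*\rVert_\infty$ with a coefficient that does not contract.

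What is missing is precisely the content of the paper's Proposition \ref{main_proposition}: the \emph{true} optimal $Q$-values of the sufficient-statistic MDP differ by at most $\epsilon R_{max}/(1-\gamma)$ on any $\phi_\epsilon$-cluster. The paper proves this not by a contraction but by a rollout/coupling argument: fix the optimal policy conditioned on $H^{(1)}$, evaluate its return as a function of the initial hidden state (a function of span at most $R_{max}/(1-\gamma)$), and pair it with the sum-to-zero vector $b(\cdot\mid H^{(1)})-b(\cdot\mid H^{(2)})$ of $L_1$-norm at most $2\epsilon$, optimality supplying the needed one-sided inequalities; an alternative proof uses a bisimulation metric together with the data processing inequality (Lemma \ref{lemmaT}) to show that belief closeness is preserved in expectation under the Bayes update --- which is exactly the fact your contraction would need. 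Once within-cluster $Q$-closeness is established, the paper invokes Lemma 1 of Abel et al.\ (2016) to obtain the $\tfrac{2\eta}{(1-\gamma)^2}$ policy loss, playing the role of your last two steps. Your skeleton is salvageable, but the load-bearing step requires one of these two arguments rather than a simulation lemma.
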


\begin{proof}
\label{app:theorem_bias}
We consider the frequentist-based MDP $\hat M_{D_\infty, \phi_0} (\Sigma_0, \mathrm A, \hat T, \hat R, \Gamma=\gamma)$, for $H \in \mathcal H$ and $a \in \mathcal A$, let us define
\begin{equation*}
\begin{split}
& \mathcal Q^{\pi_{D_\infty, \phi_0}}_{\hat M_{D_\infty, \phi_0}} (\phi_0(H),a)=\hat R'(\phi_0(H),a)+\gamma \sum_{\varphi \in \phi_0(\mathcal H)} \hat T(\phi_0(H),a,\varphi) \mathcal V^{\pi_{D_\infty, \phi_0}}_{\hat M_{D_\infty, \phi_0}}(\varphi),
\end{split}
\end{equation*}
where the reward 
$$\hat R'(\phi_0(H),a)=\underset{\varphi \in \phi_0(\mathcal H)}{\sum} \hat T(\phi_0(H),a,\varphi) \hat R(\phi_0(H),a, \varphi).$$
Then the main part of the proof is to demonstrate Proposition \ref{main_proposition} below.
From there, by applying Lemma 1 by \citeauthor{abel2016near}~\citeyear{abel2016near}, we have:
$$\left\lVert\mathcal V^{\pi_{D_\infty, \phi_0}}_{M_{D_\infty,\phi_0}}-\mathcal V^{\pi_{D_\infty, \phi_\epsilon}}_{M_{D_\infty,\phi_0}}\right\lVert_\infty \le \frac{2\frac{\epsilon R_{max}}{1-\gamma}}{(1-\gamma)^2}=\frac{2 \epsilon R_{max}}{(1-\gamma)^3}.$$
By further noticing that, when starting in $s_0$, $\hat M_{D_\infty,\phi_0}$ and $M$ provide an identical value function for a given policy $\pi_{D,\phi}$ and that $\pi_{D_\infty,\phi_0} \sim \pi^*$, i.e. $V_M^{\pi^*}=V_M^{\pi_{D_\infty,\phi_0}}$, the theorem follows.
\end{proof}

\begin{remark}
As compared to \citeauthor{hutter2014extreme}~\citeyear{hutter2014extreme} and \citeauthor{abel2016near}~\citeyear{abel2016near}, this bound relates directly to the capacity of the mapping $\phi(H)$ to retrieve sufficient information on the latent hidden state.
As compared to PBVI \cite{pineau2003point} and similar approaches, we do not make the assumption that $T$, $R$ and $O$ are known and, as such, they need to be estimated from data.
\end{remark}

We now provide Proposition \ref{main_proposition}, which is the key result required in the proof of Theorem~\ref{theorem_bias}.
\begin{proposition}
\label{main_proposition}
Let $\phi_\epsilon$ be an $\epsilon$-sufficient mapping, and let $\phi_0$ be a sufficient mapping.
Then, for any $H^{(1)}, H^{(2)}$ such that $\phi_\epsilon(H^{(1)})=\phi_\epsilon(H^{(2)})$, we have
\begin{align*}
\underset{a}{\max} & \left|\mathcal Q^{\pi_{D_\infty, \phi_0}}_{\hat M_{D_\infty, \phi_0}}(\phi_0(H^{(1)}),a) -\mathcal Q^{\pi_{D_\infty, \phi_0}}_{\hat M_{D_\infty, \phi_0}}(\phi_0(H^{(2)}),a)\right| \quad \le \epsilon \frac{R_{max}}{(1-\gamma)}. \\
\end{align*}
\end{proposition}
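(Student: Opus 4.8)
The plan is to reduce the statement to a Lipschitz-continuity property of the optimal $Q$-function of the belief-MDP with respect to the belief state, measured in the $L_1$ norm. First I would observe that, because $\phi_0$ is a sufficient mapping and the dataset is asymptotic, the frequentist-based process $\hat M_{D_\infty, \phi_0}$ reproduces exactly the belief-MDP dynamics of $M$ on the reachable beliefs: each abstract state $\phi_0(H)$ carries the belief $b(\cdot\mid H)$ of Definition \ref{belief_state}, and $\hat T,\hat R$ (Definition \ref{augmented_DP}) converge to the model-based transition and reward expressed over beliefs. Since $\pi_{D_\infty,\phi_0}$ is optimal in $\hat M_{D_\infty,\phi_0}$, the quantity $\mathcal Q^{\pi_{D_\infty,\phi_0}}_{\hat M_{D_\infty,\phi_0}}(\phi_0(H),a)$ may be identified with the \emph{optimal} belief-MDP $Q$-value, which I abbreviate $Q^\star(b(\cdot\mid H),a)$. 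This lets me argue entirely in terms of beliefs.

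Second, I would bound the distance between the two relevant beliefs. Writing $\varphi=\phi_\epsilon(H^{(1)})=\phi_\epsilon(H^{(2)})$ and applying the triangle inequality to Definition \ref{eps-sufficient_stat},
\[
\|b(\cdot\mid H^{(1)}) - b(\cdot\mid H^{(2)})\|_1 \le \|b(\cdot\mid H^{(1)}) - b_\phi(\cdot\mid\varphi)\|_1 + \|b_\phi(\cdot\mid\varphi) - b(\cdot\mid H^{(2)})\|_1 \le 2\epsilon.
\]

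Third --- the crux --- I would establish that $Q^\star(\cdot,a)$ is Lipschitz in the belief for the $L_1$ norm. I would invoke the standard fact that the optimal value function of a POMDP, and hence $Q^\star(\cdot,a)$ for each fixed $a$, is piecewise-linear and convex in the belief, i.e. $Q^\star(b,a)=\max_k\langle\alpha^a_k,b\rangle$, where the vectors $\alpha^a_k$ are expected discounted returns of conditional plans and therefore have entries in $[0,R_{max}/(1-\gamma)]$. Assuming without loss of generality $Q^\star(b^{(1)},a)\ge Q^\star(b^{(2)},a)$ and letting $\alpha^a_{k^\star}$ attain the maximum at $b^{(1)}$, one gets $0\le Q^\star(b^{(1)},a)-Q^\star(b^{(2)},a)\le\langle\alpha^a_{k^\star},b^{(1)}-b^{(2)}\rangle$. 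Since $b^{(1)}-b^{(2)}$ sums to zero, I may re-center $\alpha^a_{k^\star}$ by subtracting $\tfrac{R_{max}}{2(1-\gamma)}\mathbf 1$ before applying H\"older, so the effective $\ell_\infty$ radius is only $\tfrac{R_{max}}{2(1-\gamma)}$, giving
\[
\bigl|Q^\star(b^{(1)},a)-Q^\star(b^{(2)},a)\bigr|\le \frac{R_{max}}{2(1-\gamma)}\,\|b^{(1)}-b^{(2)}\|_1.
\]
Combined with the previous step this is at most $\tfrac{R_{max}}{2(1-\gamma)}\cdot 2\epsilon=\tfrac{\epsilon R_{max}}{1-\gamma}$, and taking the maximum over $a$ yields the claim.

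The main obstacle I anticipate is making Step 3 rigorous and, in particular, extracting the factor $\tfrac12$: a naive H\"older bound using $\|\alpha^a_k\|_\infty\le R_{max}/(1-\gamma)$ would only give $2\epsilon R_{max}/(1-\gamma)$, so the sum-to-zero centering argument is essential to land the stated constant $\epsilon R_{max}/(1-\gamma)$. A secondary point requiring care is Step 1, namely justifying that the finite frequentist MDP built on the sufficient statistic genuinely reproduces the belief-MDP optimal $Q$-function on the reachable beliefs, so that the piecewise-linear-convex representation with bounded $\alpha$-vectors is legitimately available; this is precisely the equivalence already asserted in the proof of Theorem \ref{theorem_bias}.
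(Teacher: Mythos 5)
Your proposal is correct and is essentially the paper's own (first) proof in different clothing: the $\alpha$-vector attaining the maximum at $b(\cdot\mid H^{(1)})$ is exactly the vector $s\mapsto \mathcal Q^{\pi_1}(s,a)$ of state-wise returns of the policy conditioned on $H^{(1)}$, your supporting-hyperplane inequality is the paper's optimality inequality $\mathbb{E}_{s\sim b(\cdot\mid H^{(2)})}\mathcal Q^{\pi_2}(s,a)\ge \mathbb{E}_{s\sim b(\cdot\mid H^{(2)})}\mathcal Q^{\pi_1}(s,a)$, and the sum-to-zero re-centering that buys the factor $\tfrac12$ is the same trick the paper uses via the bound $|\mathcal Q^{\pi_1}(s,a)-\mathcal Q^{\pi_1}(s',a)|\le R_{max}/(1-\gamma)$. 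The only substantive difference is that the paper justifies the linear representation $\mathcal Q^{\pi}_{\hat M_{D_\infty,\phi_0}}(\phi_0(H),a)=\mathbb{E}_{s\sim b(\cdot\mid H)}\mathcal Q^{\pi_H}(s,a)$ explicitly from the sufficiency of $\phi_0$ rather than citing the piecewise-linear-convex structure of POMDP value functions, which is the point you yourself flag as needing care.
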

\begin{proof}

For this proposition, we rely on the fact that since $\phi_\epsilon(H^{(1)})=\phi_\epsilon(H^{(2)})$, we are able to bound the L1 error terms of the associated belief states of $H^{(1)}$ and $H^{(2)}$. This is illustrated in Figure~\ref{fig:mapping}. From that bound, we then present two different ways of independent interest to prove Proposition \ref{main_proposition}.
The details of the proofs are given in Appendix \ref{main_proof}.

\begin{figure}[ht!]
\centering
\resizebox{0.4\textwidth}{!}{%
\begin{tikzpicture}[
dot/.style    = {anchor=base,fill,circle,minimum size=0.1cm, inner sep=0.2pt}]

\draw plot [smooth cycle] coordinates {(0.5,1.1)(1.3,1.2)(2.8,1.5)(2.9,2.5)(2.8,3.8)(1.2,3.5)(0.2,1.5)} node at (1.5,0.6) {History space $\mathcal H$};
\draw plot [smooth cycle] coordinates {(5.5,3.25) (7.3,3.35) (8.3, 3.2) (8.4,3.5) (8.5,4.65) (8.8,5.75) (5.3,5.75) (5.,4.45) (5.3,3.2) } node at (7.,6.5)[text width=4cm] {\centering Space of the \\$\epsilon$-sufficient statistics};
\draw plot [smooth cycle] coordinates {(5,-0.75) (6,-0.65) (6.8, -0.8) (8.5,-0.5) (8.5,0.65) (6.8,1.75) (5.2,1.75) (5.3,0.45) (4.8,-0.15) } node at (6.5,-1.4) {Belief space};

\path[->] (3.1,3.2) edge [bend left] node[label={[xshift=0.2cm, yshift=0.1cm]$\phi_\epsilon(\mathcal H)$}] {} (4.8,4.5);
\path[->] (3.1,2.2) edge [bend right] node[label={[xshift=0.5cm, yshift=0.cm]$b(\cdot \mid \mathcal H)$}] {} (5.0,1.2);

\node [dot, fill=blue, label={[xshift=0cm, yshift=0.cm, blue]$H^{(1)}$}] (dot1_belief) at (1.8,3) {};
\node [dot, fill=black!45!green, label={[xshift=0cm, yshift=-0.8cm, black!45!green]$H^{(2)}$}] (dot2_belief) at (1.8,2) {};

\node [dot, fill=blue, label={[xshift=0.2cm, yshift=0.cm, blue]$b(s \mid H^{(1)})$}] (dot1_belief) at (6,1) {};
\node [dot, fill=black!45!green, label={[xshift=0.2cm, yshift=-0.8cm, black!45!green]$b(s \mid H^{(2)})$}] (dot2_belief) at (6,0) {};
\path[<->] (dot1_belief) edge (dot2_belief);
\node at (7.2,0.5) {$\left\lVert \cdot \right\lVert_1 \le 2 \epsilon$};

\node [dot,label={[xshift=0.8cm, yshift=0.1cm]$\textcolor{blue}{\phi_\epsilon(H^{(1)})}=\textcolor{black!45!green}{\phi_\epsilon(H^{(2)})}$}] (dot1_phie) at (6,4.5) {};

\end{tikzpicture}
}
\caption{Illustration of the $\phi_\epsilon$ mapping and the belief for $H^{(1)}, H^{(2)} \in \mathcal H$: $\phi_\epsilon(H^{(1)})=\phi_\epsilon(H^{(2)})$.}
\label{fig:mapping}
\end{figure}
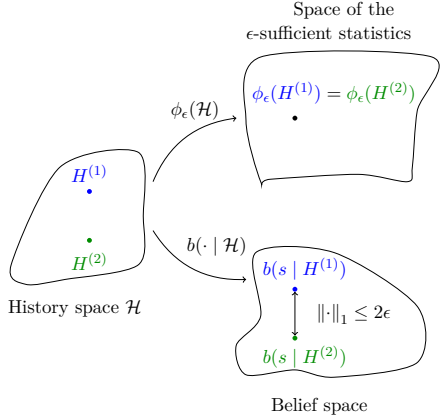

The first proof makes use of a tree of possible future observations, rewards and corresponding actions given a policy $\pi$, and we show that when starting from $H^{(1)}, H^{(2)}$ such that $\phi_\epsilon(H^{(1)})=\phi_\epsilon(H^{(2)})$, the bound holds.

We also provide an alternative proof that makes use of the formalism of the bisimulation metric \cite{ferns2004metrics} along with the data processing inequality.
Intuitively, the proof relies on the fact that the histories $H^{(1)}$ and $H^{(2)}$ are close according to a bisimulation metric that measures the behavioral similarity (future rewards and observations). The data processing inequality is used for finding that distance as a function of  the L1 error terms of the associated belief states.
\end{proof}


We now provide a bound on the overfitting error that monotonically grows with $|\phi(\mathcal H)|$. Theorem \ref{union_bound} shows that using a large set of features 
allows a larger policy class, hence potentially leading to a stronger drop in performance when the available dataset $D$ is limited (the bound decreases proportionally to $\frac{1}{\sqrt n}$). 
A theoretical analysis in the context of  MDPs with a finite dataset was performed by \citeauthor{jiang2015abstraction}~\citeyear{jiang2015abstraction}.

\begin{theorem}
\label{union_bound}
\textbf{"Bound on the overfitting":}
Let $M$ be a POMDP described by the 7-tuple $(\mathcal S,\mathcal A,T,R,\Omega,O,\gamma)$.
Let $\hat M_{D}$ be an augmented MDP $(\Sigma,\mathrm{A},\hat{T},\hat{R},\Gamma=\gamma)$ estimated, according to Definition \ref{augmented_DP}, from a dataset $D$ with the assumption that 
$D$ has $n$ transitions from any possible pair $(\phi(H),a) \in \phi(\mathcal H) \times \mathrm A$ (sampled i.i.d according to $\mathcal D_M$). 
Then the overfitting due to using the frequentist-based policy $\pi_{D, \phi}$ instead of $\pi_{D_\infty, \phi}$ in the POMDP $M$ can be bounded as follows:
\begin{equation}
\begin{aligned}
\underset{H \in \mathcal H}{\max} & \bigl( V_M^{\pi_{D_\infty, \phi}}(\phi(H))-V_M^{\pi_{D, \phi}}(\phi(H)) \bigl)
\le \frac{2 R_{max}}{(1-\gamma)^2} \sqrt{ \frac{1}{2n}  ln\left(\frac{2 |\phi(\mathcal H)| |\mathrm A|^{1+|\phi(\mathcal H)|}}{\delta}\right) },
\end{aligned}
\end{equation}
with probability at least $1-\delta$.
\end{theorem}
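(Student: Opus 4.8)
The plan is to rewrite the left-hand side as a suboptimality gap \emph{within} the family of augmented MDPs and then control it by a uniform concentration argument over the finite policy class. First I would use the identification, already exploited at the end of the proof of Theorem~\ref{theorem_bias}, that the asymptotic frequentist model $\hat M_{D_\infty}$ reproduces exactly the value in $M$ of any policy defined on $\phi(\mathcal H)$: for every $\pi:\Sigma\to\mathrm A$ and every $H$, $V_M^\pi(\phi(H))=\mathcal V^\pi_{\hat M_{D_\infty}}(\phi(H))$. Since $\pi_{D_\infty,\phi}$ is by definition optimal in $\hat M_{D_\infty}$, the quantity $V_M^{\pi_{D_\infty,\phi}}(\phi(H))$ equals the optimal value $\mathcal V^*_{\hat M_{D_\infty}}(\phi(H))$, and writing $\hat\pi:=\pi_{D,\phi}$ the left-hand side becomes exactly the suboptimality of $\hat\pi$ in $\hat M_{D_\infty}$, namely $\mathcal V^*_{\hat M_{D_\infty}}(\sigma)-\mathcal V^{\hat\pi}_{\hat M_{D_\infty}}(\sigma)$ maximized over $\sigma\in\Sigma$.

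Next I would transfer the optimality of $\hat\pi$ from $\hat M_D$ to $\hat M_{D_\infty}$ by a standard telescoping. Inserting the value of each policy in \emph{both} MDPs and using that $\hat\pi$ is optimal in $\hat M_D$ (so $\mathcal V^{\hat\pi}_{\hat M_D}\ge\mathcal V^{\pi_{D_\infty,\phi}}_{\hat M_D}$ pointwise) gives
\[
\mathcal V^*_{\hat M_{D_\infty}}-\mathcal V^{\hat\pi}_{\hat M_{D_\infty}}\ \le\ 2\,\max_{\pi}\bigl\|\mathcal V^\pi_{\hat M_{D_\infty}}-\mathcal V^\pi_{\hat M_D}\bigr\|_\infty ,
\]
the maximum being over all $\pi\in\Pi_{\phi(\mathcal H)}$. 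For a fixed $\pi$, the two value functions are the fixed points of the policy-evaluation Bellman operators $\mathcal T^\pi_{D_\infty}$ and $\mathcal T^\pi_{D}$ associated with $(\hat T_{D_\infty},\hat R_{D_\infty})$ and $(\hat T_D,\hat R_D)$; the usual contraction/fixed-point perturbation bound then yields $\|\mathcal V^\pi_{\hat M_{D_\infty}}-\mathcal V^\pi_{\hat M_D}\|_\infty\le\frac1{1-\gamma}\bigl\|(\mathcal T^\pi_{D}-\mathcal T^\pi_{D_\infty})\,\mathcal V^\pi_{\hat M_{D_\infty}}\bigr\|_\infty$, reducing everything to a one-step Bellman residual evaluated on the fixed (data-independent) function $V:=\mathcal V^\pi_{\hat M_{D_\infty}}$.

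The key probabilistic step is to bound this residual by Hoeffding's inequality. At a pair $(\sigma,a)$ the difference $(\mathcal T^\pi_{D}-\mathcal T^\pi_{D_\infty})V$ is precisely the deviation of the empirical average $\frac1n\sum_{i=1}^n X_i$ from its expectation, where the $n$ i.i.d.\ samples drawn from $(\sigma,a)$ contribute $X_i:=r_i+\gamma V(\sigma'_i)$. Because $r_i\in[0,R_{max}]$ and $V\in[0,\tfrac{R_{max}}{1-\gamma}]$, each $X_i$ lies in an interval of width $\tfrac{R_{max}}{1-\gamma}$, so for a single $(\pi,\sigma,a)$ the residual exceeds $t$ with probability at most $2\exp\!\bigl(-2nt^2(1-\gamma)^2/R_{max}^2\bigr)$. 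I would then take a union bound over the at most $|\mathrm A|^{|\phi(\mathcal H)|}$ policies, the $|\phi(\mathcal H)|$ states and the $|\mathrm A|$ actions (and the two tails), i.e.\ over $2|\phi(\mathcal H)|\,|\mathrm A|^{1+|\phi(\mathcal H)|}$ events. Setting the total failure probability to $\delta$ and solving for $t$ gives $t=\tfrac{R_{max}}{1-\gamma}\sqrt{\tfrac1{2n}\ln\!\bigl(\tfrac{2|\phi(\mathcal H)||\mathrm A|^{1+|\phi(\mathcal H)|}}{\delta}\bigr)}$; combining with the factors $2$ and $\tfrac1{1-\gamma}$ collected above produces exactly the claimed $\tfrac{2R_{max}}{(1-\gamma)^2}\sqrt{\cdots}$ bound.

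The delicate points I would watch are, first, the validity of Hoeffding inside the union bound: the target policy $\hat\pi$ is selected from the data, so it cannot be fixed in advance, but since $V=\mathcal V^\pi_{\hat M_{D_\infty}}$ depends only on $\pi$ and on the (deterministic) asymptotic model and not on $D$, the $X_i$ are genuinely i.i.d.\ for each fixed $\pi$, and the union over the entire policy class $\Pi_{\phi(\mathcal H)}$ legitimately covers the data-dependent choice. Second, the reduction in the first step---that evaluating a policy on $\phi(\mathcal H)$ in the true POMDP coincides with evaluating it in $\hat M_{D_\infty}$---is the conceptual crux tying the augmented MDP back to $M$; I would justify it exactly as in the proof of Theorem~\ref{theorem_bias}, noting that here both $\hat M_D$ and $\hat M_{D_\infty}$ share the same aggregation $\phi$ and sampling policy $\pi_s$, so the only discrepancy between them is finite-sample noise, which is what the concentration argument quantifies.
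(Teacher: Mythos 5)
Your proposal is correct and follows essentially the same route as the paper's proof: the same add-and-subtract decomposition exploiting the optimality of $\pi_{D,\phi}$ in $\hat M_D$ to get a factor $2\max_\pi$, the same contraction/fixed-point perturbation reducing the value gap to a one-step Bellman residual (the paper's Lemma~\ref{partial}), and the same Hoeffding bound on i.i.d.\ backups in $[0,R_{max}/(1-\gamma)]$ with a union bound over $2|\phi(\mathcal H)|\,|\mathrm A|^{1+|\phi(\mathcal H)|}$ events. The only difference is presentational: you route the comparison through $\hat M_{D_\infty}$ via the identification $V_M^\pi=\mathcal V^\pi_{\hat M_{D_\infty}}$, whereas the paper works with $V_M^\pi$ and $Q_M^\pi$ directly.
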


\begin{proof}
The proof of Theorem \ref{union_bound} is deferred to Appendix \ref{app:union_bound}.
\end{proof}


Overall, Theorems \ref{theorem_bias} and \ref{union_bound} can help choose a good state representation for POMDPs as they provide bounds on the two terms that appear in the bias-overfitting decomposition of Equation \ref{bias-overfitting}. For example, an additional feature in the mapping $\phi$ has an overall positive effect only if it provides a significant increase of information on the belief state (i.e. if it allows one to obtain a more accurate knowledge of the underlying state of the MDP defined by $T$ and $R$ when given $\phi(H)$). This increase of information must be significant enough to compensate for the additional risk of overfitting when choosing a large cardinality of $\phi(\mathcal H)$.
Note that one could combine the two bounds to theoretically define an optimal choice of the state representation with lower bound guarantees regarding the bias-overfitting tradeoff. In practice, as the two bounds are loose, other techniques described in Section \ref{selec_params} are usually more useful.

\paragraph{Related work}
Partial observability is very common in real world domains and though there has been many works in state abstraction and homomorphisms in the MDP setting \cite{ravindran2004approximate,ferns2004metrics}, there has been relatively little in the POMDP setting. 
One related work in POMDPs by \citeauthor{castro2009equivalence}~\citeyear{castro2009equivalence} has discussed the parallel between the notion of bisimulation and a notion of trace equivalence, under which states are considered equivalent if they generate the same conditional probability distributions over observation sequences (where the conditioning is on action sequences).

In this paper, we introduce the definition of $\epsilon$-sufficient statistics in POMDPs and derive bounds for the bias based on this property. 
An insightful part of the proof of Proposition \ref{main_proposition} is to use the bisimulation metric \cite{ferns2004metrics} and the data processing inequality.
Note that the bisimulation metrics may also be used to take into account how the errors on the belief states may have less of an impact at the condition that the hidden states affected by these errors are close according to the bisimulation metric. 
In case there is some knowledge on the underlying dynamics, one could also extend the notion of the bisimulation metric to allow certain distinct actions to be essentially equivalent \cite{arun2006bisimilarities}.
In that context, \citeauthor{taylor2009bounding}~\citeyear{taylor2009bounding} have generalized the notion of the bisimulation metric to a lax bisimulation metric, which may take into account some symmetries and special structures.

\subsection{Discussion on Function Approximators}
As described earlier, a straightforward mapping $\phi(\cdot)$ may be obtained by discarding some features from the observable history. 
In addition, the theoretical work developed in this paper can be useful to understand how a specific design of deep neural networks may work well (or not). Indeed, deep neural networks can be seen as a composition of many learnable mappings (constrained by the design of the neural network) and we provide bounds that depend on the property of a given mapping $\phi$ of the inputs (e.g., the first layers of a deep Q-network).
If there is, for instance, an attention mechanism in the first layers of a deep neural network, the mapping made up of those first layers can be analyzed through the bounds developed in our work (e.g., it should not discard important observations that prevent the mapping to be close to some sufficient statistics).
As another example, our work helps explain why basic recurrence in a neural network may not be well-suited for POMDPs that require a long history of observations for approaching some sufficient statistics. Indeed, basic recurrent cells are known to have difficulties to convey the long-term dependencies in sequences (i.e. features from the first time steps in a long time series) and LSTMs \cite{hochreiter1997long} or other variants should be preferred.

Note that we could add a theorem showing that using, for instance, Rademacher complexity would allow providing other bounds (potentially tighter) than Theorem \ref{union_bound}. However, such a bound is usually of little interest in practice, specifically concerning deep learning, because the bounds based on complexity measures fail to provide insights on the generalization capabilities of neural networks \cite{zhang2016understanding}. Indeed, it has been empirically demonstrated that deep neural networks have strong generalization capabilities even with a high number of parameters (hence a potentially large complexity). In other words, a very loose bound on the overfitting error due to a large number of parameters may not lead to a performance drop in practice.

\subsection{Selection of the Parameters with Validation or Cross-Validation to Balance the Bias-Overfitting Tradeoff}
\label{selec_params}


In the batch setting case, the selection of the policy parameters to effectively balance the bias-overfitting tradeoff can be done similarly to that in supervised learning (e.g., cross-validation) as long as the performance criterion can be estimated from a subset of the trajectories from the dataset $D$ not used during training (validation set).
One possibility is to fit an MDP model from data via the frequentist approach (or regression), and evaluate the policy against the model (with Monte-Carlo simulations). Another approach is to use the idea of importance sampling \cite{precup2000eligibility}. A mix of the two approaches has also been developed \cite{jiang2016doubly,thomas2016data}. 

\paragraph{Importance of the discount factor $\Gamma$ used in the training phase:}
Besides the elements related to feature selection and function approximators, artificially lowering the discount factor can also be used to improve the performance of the policy when solving MDPs with limited data \cite{petrik2009biasing,jiang2015dependence}.
In the partially observable setting, these results can be transferred to the frequentist-based MDP $(\Sigma,\mathrm A,\hat T,\hat R,\Gamma)$ by choosing $\Gamma<\gamma$, which introduces a bias but reduces the risk of overfitting. 



\section{Experiments}
\label{exp}

This section provides empirical illustrations of the theoretical results both on synthetic POMDPs and on a large-scale POMDP in the context of smartgrids, with real-world data. 
On the synthetic POMDPs, we first illustrate the main theoretical results of this paper related to the state representation of POMDPs and we also illustrate the use of function approximators and the training discount factor $\Gamma$.
On the large-scale POMDP, we illustrate that an efficient feature selection process can be useful, even when used in addition to function approximators and a biased discount factor.

\subsection{Synthetic POMDPs}
\subsubsection{Protocol}

In order to be representative of a diversity of environments, we provide results on a distribution of POMDPs. 
We randomly sample $N_P$ POMDPs such that $N_{\mathcal S} = 5$, $N_{\mathcal A} = 2$ and  $N_{\Omega} = 5$ (except when stated otherwise) from a distribution $\mathcal P$ that we refer to as Random POMDP. The distribution $\mathcal P$ is fully determined by specifying a distribution over the set of possible transition functions $T (\cdot, \cdot, \cdot)$, a distribution over the set of reward functions $R(\cdot,\cdot,\cdot)$, and a distribution over the set of possible conditional observation probabilities $O(\cdot, \cdot)$.

Random transition functions $T (\cdot, \cdot, \cdot)$ are drawn by assigning, for each entry $(s,a,s')$, a zero value with probability 3/4, and, with probability 1/4, a non-zero entry with a value drawn uniformly in $[0, 1]$. For all $(s,a)$, if all $T(s,a,s')$ are zeros, we enforce one non-zero value for a random $s' \in \mathcal S$. Values are normalized.
Random reward functions are generated by associating to all possible $(s,a,s')$ a reward sampled uniformly and independently from $[-1, 1]$. 
Random conditional observation probabilities $O(\cdot, \cdot)$ are generated in the following way: the probability of observing $o^{(i)}$ when in state $s^{(i)}$ is equal to 0.5, while all other values are chosen uniformly randomly so that it is normalized for any $s$.
For all POMDPs, we have $\gamma = 1$ and $\Gamma = 0.95$ if not stated otherwise and we truncate the trajectories to a length of $N_l=100$ time steps.


For each generated POMDP $P  \sim \mathcal P$, we generate 20 datasets $D \in \mathcal D_P$ where $\mathcal D_P$ is a probability distribution over all possible sets of $n$ trajectories ($n \in [2,5000]$); where each trajectory is made up of a history $H_{100}$ of 100 time steps, when starting from an initial state $s_0 \in \mathcal S$ and taking uniform random decisions.
Each dataset $D$ induces a policy $\pi_{D, \phi}$, and we want to evaluate the expected return of this policy while discarding the variance related to the stochasticity of the transitions, observations and rewards. 
To do so, policies are tested with 1000 rollouts of the policy.
For each POMDP $P$, we are then able to get an estimate of the average score $\mu_P$ which is defined as:
$$
\mu_P=\underset{D \sim \mathcal D_P}{\mathbb E} \  \underset{}{\mathbb E} \left[ \sum_{t=0}^{N_l} \gamma^t r_t | s_0, \pi_{D, \phi} \right]
.$$
We are also able to get an estimate of a parametric variance $\sigma_P^2$ defined as:
$$
\sigma_P^2=\underset{D \sim \mathcal D_P}{\text{Var}} \  \underset{}{\mathbb E} \left[ \sum_{t=0}^{N_l} \gamma^t r_t | s_0, \pi_{D, \phi} \right]
.$$

\subsubsection{History Processing}
\label{choose_history}

We show experimentally that any additional feature from the history $H_t$ is likely to reduce the asymptotic bias, but may also increase the overfitting.
For any history length $h$, we consider the mapping $\phi_h$ that extracts the current observation and the last $h-1$ (observation, action) tuples. In the experiments, we compare the policies $\pi_{D,\phi_h}$ for $h=1,2,3$.

\begin{figure}[ht!]
    \centering
        \includegraphics[width=230pt]{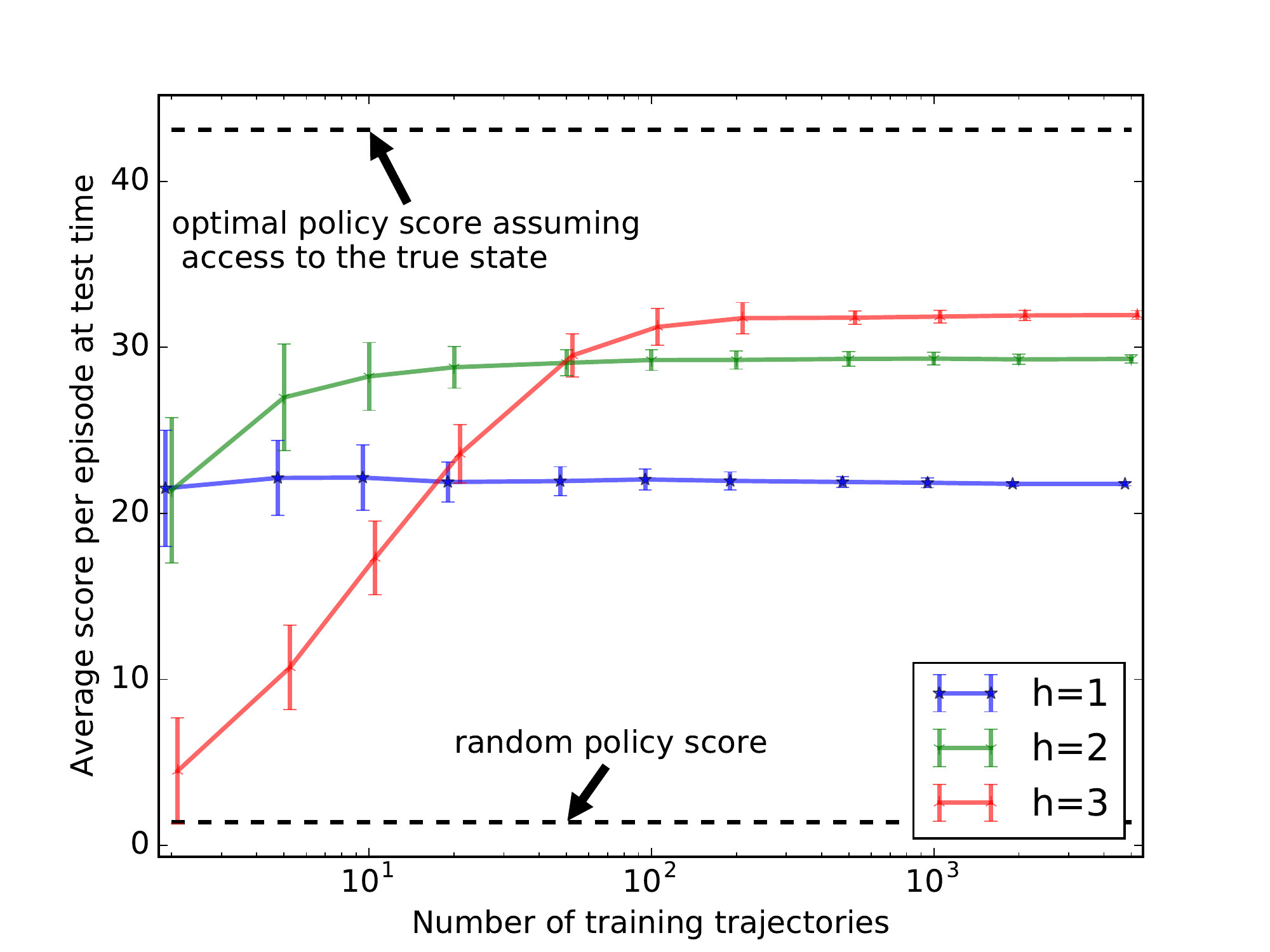}
     \caption{Evolution (as a function of the size of the dataset) of estimated values of $\underset{P \sim \mathcal P}{\mathbb E} \mu_P  \pm  \underset{P \sim \mathcal P}{\mathbb E} \sigma_P$ computed from a sample of $N_P=50$ POMDPs drawn from $\mathcal P$. The bars are used to represent the variance observed when dealing with different datasets drawn from a distribution; note that this is not a usual error bar.}
    \label{fig:Random_POMDP_results}
\end{figure}


The values $\underset{P \sim \mathcal P}{\mathbb E} \mu_P$ and $\underset{P \sim \mathcal P}{\mathbb E} \sigma_P$
 are displayed in Figure~\ref{fig:Random_POMDP_results}.
One can observe that a small set of features (small history) appears to be a better choice (in terms of total bias) when the dataset is small (only a few trajectories). With an increasing number of trajectories, the optimal choice in terms of number of features ($h=1, 2$ or $3$) also increases.
In addition, one can also observe that the expected variance of the score decreases as the number of samples increases. As the variance decreases, the risk of overfitting also decreases, and it becomes possible to target a larger policy class (using a larger feature set).

The overfitting error is also linked to the variance of the value function estimates in the frequentist-based MDP.
When these estimates have a large variance,  an overfitting term appears because of a higher chance of picking one of the suboptimal policies, as illustrated in Figure~\ref{fig:Random_POMDP_results_distrib}. 


\begin{figure}[ht!]
    \centering
        \includegraphics[width=260pt]{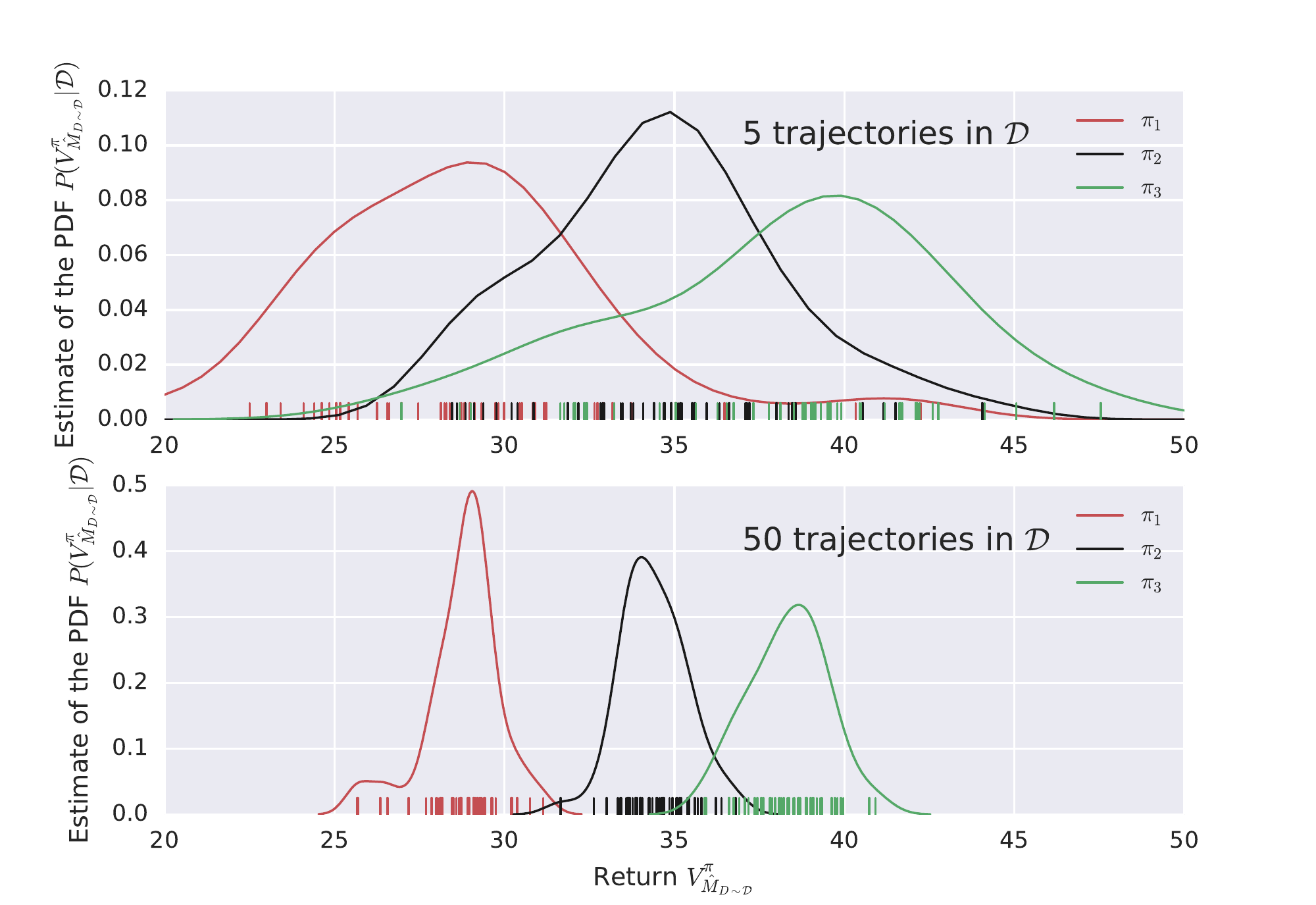}
     \caption{Illustrations of the return estimates in the augmented MDP $\hat {M}_{D}$ ($h=2$) for three different policies. 
     The policies are selected specifically for illustration purposes based on the criterion $V^{\pi_D}_M$; the best performing (in green), the worst performing (in red) and the median performing were selected in a set of 50 policies built when $D \sim \mathcal D$ has 5 trajectories of data from the actual POMDP. 
     On the actual POMDP, the expected returns are $V^{\pi_1}_M=28$, $V^{\pi_2}_M=33$, $V^{\pi_3}_M=42$ (in general, these values need not be the same as the expected value of the probability distribution in the two graphs).}
    \label{fig:Random_POMDP_results_distrib}
\end{figure}

\subsubsection{Function Approximator and Discount Factor}

We also illustrate the effect of using function approximators on the bias-overfitting tradeoff. To do so, we process the output of the state representation $\phi(\cdot) $ into a deep Q-learning scheme (technical details are given in Appendix \ref{app:Q_learn}).
We can see in Figure~\ref{fig:Random_POMDP_NN} that deep Q-learning policies suffer less overfitting as compared to the frequentist-based approach (lower degradation of performance in the low-data regime) even though using a large set of features still leads to more overfitting than a small set of features. We also see that deep Q-learning policies do not introduce an important asymptotic bias (identical performance when a lot of data is available) because the neural network architecture is rich enough.
Note that the variance is slightly larger than in Figure~\ref{fig:Random_POMDP_results}, and does not vanish to 0 with additional data. This is due to the additional stochasticity induced when building the Q-value function with neural networks (note that when performing the same experiments while taking the average recommendation of several Q-value functions, this variance decreases with the number of Q-value functions).

\begin{figure}
    \centering
        \includegraphics[width=230pt]{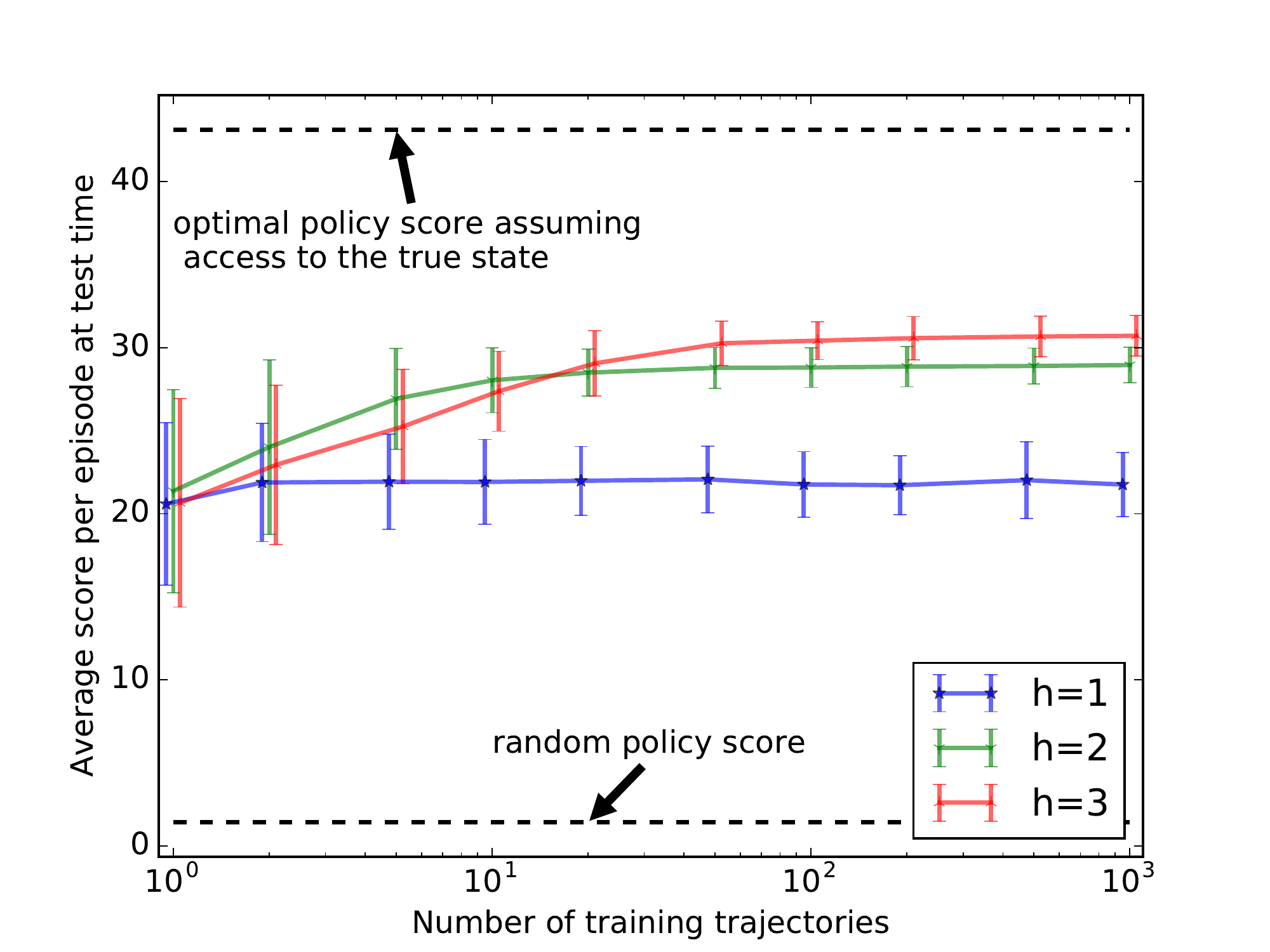}
     \caption{Evolution (as a function of the size of the dataset) of estimated values of $\underset{P \sim \mathcal P}{\mathbb E} \mu_P  \pm  \underset{P \sim \mathcal P}{\mathbb E} \sigma_P$ computed from a sample of $N_P=50$ 
      POMDPs drawn from $\mathcal P$ (same as Figure~\ref{fig:Random_POMDP_results}) 
     with neural network as a function approximator. 
The bars are used to represent the variance observed when dealing with different datasets drawn from a distribution; this is not a usual error bar.}
    \label{fig:Random_POMDP_NN}
\end{figure}

Finally, we empirically illustrate in Figure~\ref{fig:Random_POMDP_discount} the effect of modifying the discount factor $\Gamma$.
When the training discount factor is lower than the one used in the actual POMDP ($\Gamma<\gamma$), there is an additional bias term, while when a high discount factor is used ($\Gamma$ close to $1$) with a limited amount of data, overfitting increases. In our experiments, the influence of the discount factor is more subtle as compared to the impact of the state representation and the function approximator. The influence is nonetheless clear: it is better to have a low discount factor when only a few data points are available, and it is better to have a high discount factor when a lot of data is available, which is in line with previous analyses for MDPs \cite{jiang2015dependence}.

\begin{figure}[!ht]
    \centering
        \includegraphics[width=230pt]{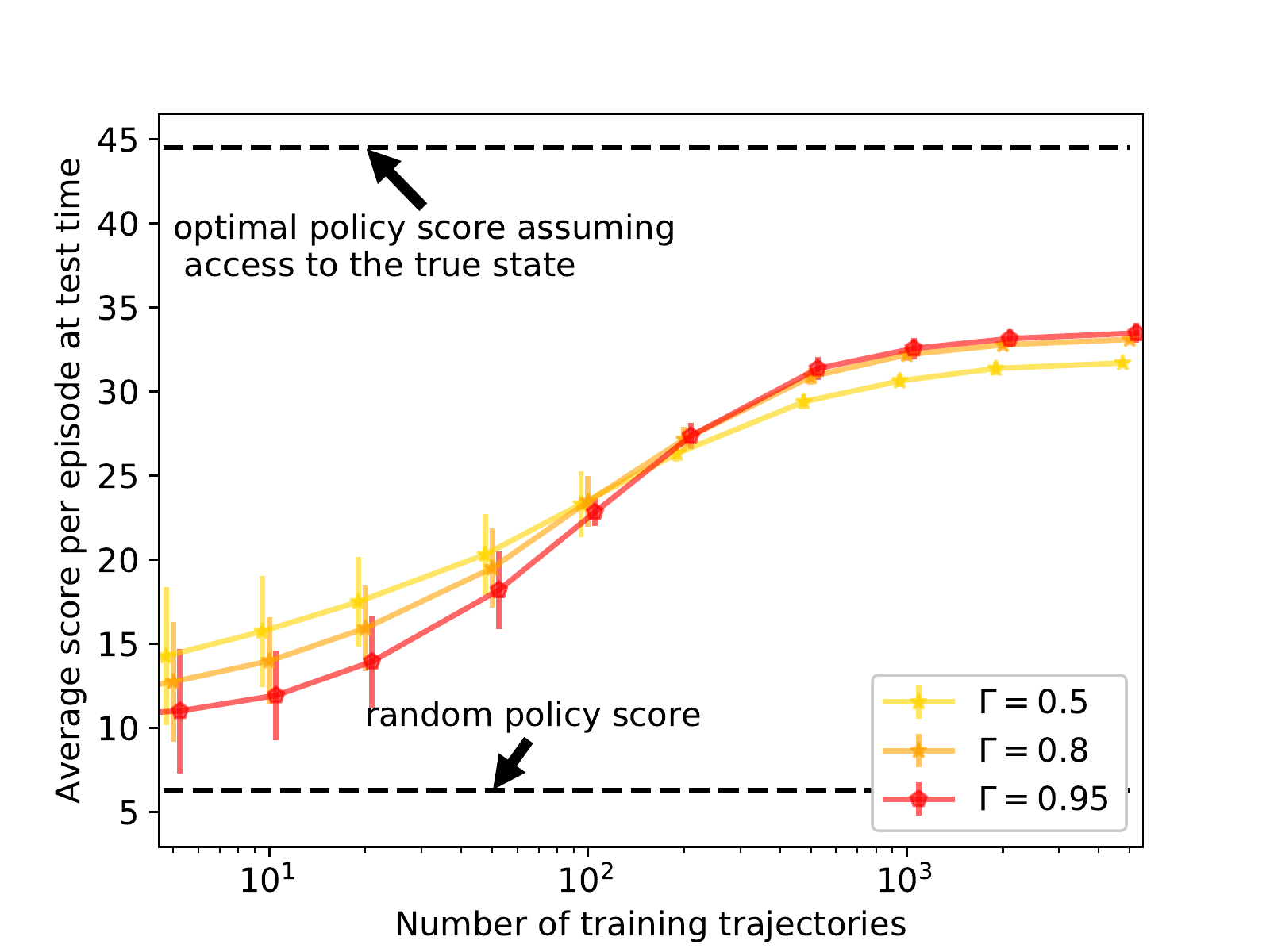}
        \caption{Evolution in the frequentist-based case (as a function of the size of the dataset) of estimated values of $\underset{P \sim \mathcal P}{\mathbb E} \mu_P  \pm  \underset{P \sim \mathcal P}{\mathbb E} \sigma_P$ computed from a sample of $N_P=10$ 
      POMDPs drawn from $\mathcal P$ with $N_{\mathcal S} = 8$ and  $N_{\Omega} = 8$ ($h=3$).
      The bars are used to represent the variance observed when dealing with different datasets drawn from a distribution; this is not a usual error bar.}
        \label{fig:Random_POMDP_discount}
\end{figure}

\subsection{Real-world Application in Smartgrids}

The microgrid model considered here was introduced by \citeauthor{franccois2016deep}~\citeyear{franccois2016deep} and we next describe the main elements \footnote{The source code is available at \url{https://github.com/VinF/deer/}.}. 

\subsubsection{Microgrid Benchmark} 
The microgrid is powered by photovoltaic (PV) panels combined with both long-term storage (hydrogen-based) and short-term storage (such as, for instance, LiFePO$_4$ batteries).
These two types of storage aim at fulfilling, at best, the demand by addressing the seasonal and daily fluctuations of solar irradiance. 
Distinguishing short- from long-term storage is mainly a question of cost: batteries are too expensive to be used for addressing seasonal variations.




Operating the microgrid is formalized as a POMDP over discrete time steps of one hour. The observations at each time step are made up of the consumption $ \left( c_t \right)$, the production $\left( \psi_t \right)$ and the level of energy in the battery ($s^{B}_t$).
The mapping $\phi(\cdot)$ is defined such that
$$\phi(H_t) = \left[ [c_{t-h^{c}}, \ldots, c_{t-1}], [\psi_{t-h^{p}}, \ldots, \psi_{t-1}], s^{B}_t\right],$$
where $h^c=h^p$ are the lengths of the time series considered for the consumption and production, respectively.

The instantaneous reward signal $r_t$ is obtained by adding the revenues generated by the hydrogen production $r^{H_2}$ and the penalties $r^-$\label{ntn:r_h2_r-} due to the value of loss load: 
\begin{equation}
r_t=r(a_t, d_t)=r^{H_2}(a_t) + r^-(a_t, d_t),
\end{equation}
where $d_t$ denotes the net electricity demand, which is the difference between the local consumption $c_t$ and the local production of electricity $\psi_t$.
The penalty $r^-$ is proportional to the total amount of energy that was not supplied to meet the demand, with the cost incurred per kilowatt-hour ($kWh$) not supplied within the microgrid set to $2$ \euro$/kWh$ (corresponding to a value of loss load).
The revenues (or cost) generated by the hydrogen production $r^{H_2}$ is proportional to the total amount of energy transformed (or consumed) in the form of hydrogen, where the revenue (or cost) per $kWh$ of hydrogen produced (or used) is set to $0.1$ \euro$/kWh$.

A synthetic residential consumption profile is considered with a daily consumption of $18kWh$. The PV production profile comes from actual data of a residential customer located in Belgium (average production changes by a factor of about 1:5 between summer and winter). A fixed sizing of the microgrid is considered. The size of the battery is $x^B=15kWh$, the instantaneous power of the hydrogen storage is $x^{H_2}=1.1kW$ and the peak power generation of the PV installation is $x^{PV}=12kW_p$ ($kW_p$ stands for kilowatt-peak, which is the maximum electric power that can be supplied by the PV panels).

The action $a_t$ is an element of the action space made up of three discrete actions: $\mathcal A=\{a^{(0)},a^{(1)},a^{(2)}\}$. The possible actions relate to whether the microgrid creates hydrogen from electricity, creates electricity from hydrogen or leaves the long-term storage unused. The battery adapts to store excess electrical power (except when the battery is full) while avoiding any loss load (except when the battery is empty, which leads to negative rewards). 

Note that in this specific application, using a history of observations from the previous time steps provides information on the latent features (of the state of the POMDP) such as the time of the day, the weather, the season (PV production and consumption time series are conditionally dependent on these latent features).

Also note that the dynamics of the system depend on exogenous time series (production and consumption time series) for which the agent only has finite data.
In order to evaluate the capacity of the agent to generalize, we can break down the exogenous time series into a part that will be used for training and one part that will be used for validation \cite{franccois2016deep}. The final performance is then evaluated on the environment with unseen time series.

\subsubsection{Splitting the Times Series to Avoid Overfitting}
\label{app:splitting}

In this microgrid domain, the agent is provided with up to two years of actual past realizations of the consumption $ \left( c_t \right)$ and the production $\left( \psi_t \right)$. 
These past realizations are split into a training environment (one year) and a validation environment (one year). The training environment is used to train the policy, while the validation environment is used at each epoch \footnote{An epoch is defined as the set of all iterations required to go through the whole year of the exogenous time series $c_t$ and $\psi_t$. Each iteration is made up of a transition of one time-step in the environment as well as a gradient step of all parameters $\theta$ of the Q-network.} 
to estimate how well the policy performs on the undiscounted sum of rewards, and it selects the best (approximated) Q-network denoted $\widetilde{Q}^*$\label{ntn:best_approx_Q} before overfitting (by selecting a discount factor lower than the maximum and by using early stopping). It also has the advantage of picking up the Q-network at an epoch less affected by instabilities.
The selected trained Q-network is then used in a test environment ($y=3$) to provide an independent estimation of how well the resulting policy performs.
Technical details relative to the deep Q-network algorithm used are given in Appendix \ref{app:microgrid}.

To empirically demonstrate the effect of the bias-overfitting tradeoff, we artificially reduce the diversity of the training and validation time-series by a factor $\kappa=\{1,2,4,8,16\}$.
Overall, the time series for training and validation should still be as close to 365 days as possible (the epochs are run on approximately the same given number of time steps for all cases). 
In addition, the true underlying processes should be respected as closely as possible by (i)~guaranteeing that the succession of the seasons are not corrupted and (ii)~guaranteeing that consecutive days in the original time series should be kept consecutive whenever possible.

In order to do so, the time series of one year are divided into four seasons. Each season is then split into $\kappa$ blocs of the same number of days.
For each season, data reduction is done by replicating one of the blocs instead on all remaining blocs of the same season. This artificial reduction on the available data is illustrated on Figure~\ref{POMDP_th1}.
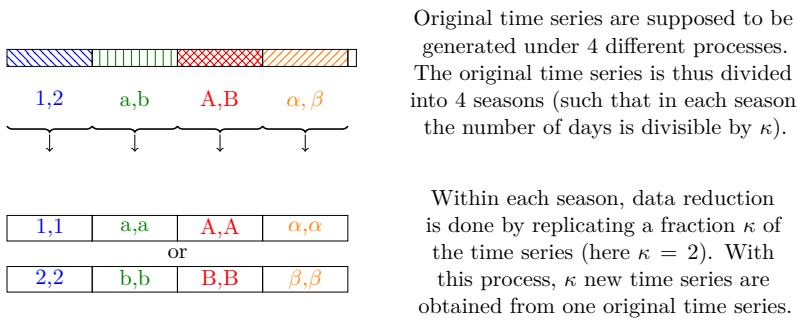
\begin{figure}[ht!]
 \centering
    \resizebox{0.75\textwidth}{!}{%
\begin{tikzpicture}[scale=1.5]

	\draw[pattern=north west lines, pattern color=blue] (0,0.1) rectangle (1,-0.1);
	\draw[pattern=vertical lines, pattern color=green!50!black] (1,0.1) rectangle (2,-0.1);
	\draw[pattern=crosshatch, pattern color=red] (2,0.1) rectangle (3,-0.1);
	\draw[pattern=north east lines, pattern color=orange] (3,0.1) rectangle (4,-0.1);
	\draw[] (4,0.1) rectangle (4.1,-0.1);

	\node[align=center, text width=3cm, color=blue] at (0+0.5,-0.5) {1,2};
	\node[align=center, text width=3cm, color=green!50!black] at (1+0.5,-0.5) {a,b};
	\node[align=center, text width=3cm, color=red] at (2+0.5,-0.5) {A,B};
	\node[align=center, text width=3cm, color=orange] at (3+0.5,-0.5) {$\alpha,\beta$};

		\foreach \x in {0,...,3}{
			\draw [thick,decorate,decoration={brace,amplitude=3pt,raise=0pt,mirror}] (\x,-0.8) -- (\x+1,-0.8);
			\draw [->] (\x+0.5,-0.9) -- (\x+0.5,-1.1);
		}

%
%

		\node[align=center, text width=3cm, color=blue] at (0.5,-1.4-0.6) {1,1};
		\node[align=center, text width=3cm, color=green!50!black] at (1+0.5,-1.4-0.6) {a,a};
		\node[align=center, text width=3cm, color=red] at (2+0.5,-1.4-0.6) {A,A};
		\node[align=center, text width=3cm, color=orange] at (3+0.5,-1.4-0.6) {$\alpha$,$\alpha$};

		\node[align=center, text width=3cm, color=blue] at (0.5,-1.4-2*0.6) {2,2};
		\node[align=center, text width=3cm, color=green!50!black] at (1+0.5,-1.4-2*0.6) {b,b};
		\node[align=center, text width=3cm, color=red] at (2+0.5,-1.4-2*0.6) {B,B};
		\node[align=center, text width=3cm, color=orange] at (3+0.5,-1.4-2*0.6) {$\beta$,$\beta$};

		\node[align=center, text width=3cm] at (2.,-2.3) {or};

		\foreach \s in {0,...,3}{
			\draw[] (0+\s,-1.85) rectangle (1+\s,-2.15);
			\draw[] (0+\s,-2.45) rectangle (1+\s,-2.75);
		}


\node[align=center, text width=7cm] at (7,-0.2) {Original time series are supposed to be generated under 4 different processes. The original time series is thus divided into 4 seasons (such that in each season the number of days is divisible by $\kappa$).};
\node[align=center, text width=7cm] at (7,-2.3) {Within each season, data reduction is done by replicating a fraction $\kappa$ of the time series (here $\kappa=2$). With this process, $\kappa$ new time series are obtained from one original time series.};

\end{tikzpicture}
}
\caption{Illustration of how the quantity of data in a time series of one year is artificially reduced by a factor $\kappa=2$ while guaranteeing that the specificity of the seasons are essentially preserved. This schema applies for both the consumption and the production time-series and for both the training time series and validation time series.}
\label{POMDP_th1}
\end{figure}

\subsubsection{Results of the Experiments}
The purpose of the experiments is to illustrate the theoretical results, and are given in Figure~\ref{fig:microgrid_benchmark}. On the one hand, using a long history may pose problems relative to overfitting when the amount of training data is limited. On the other hand, using a long history of observations allows achieving a good operation when enough data is available (information on the latent features are preserved). 

\begin{figure}[!ht]
    \centering
        \includegraphics[width=220pt]{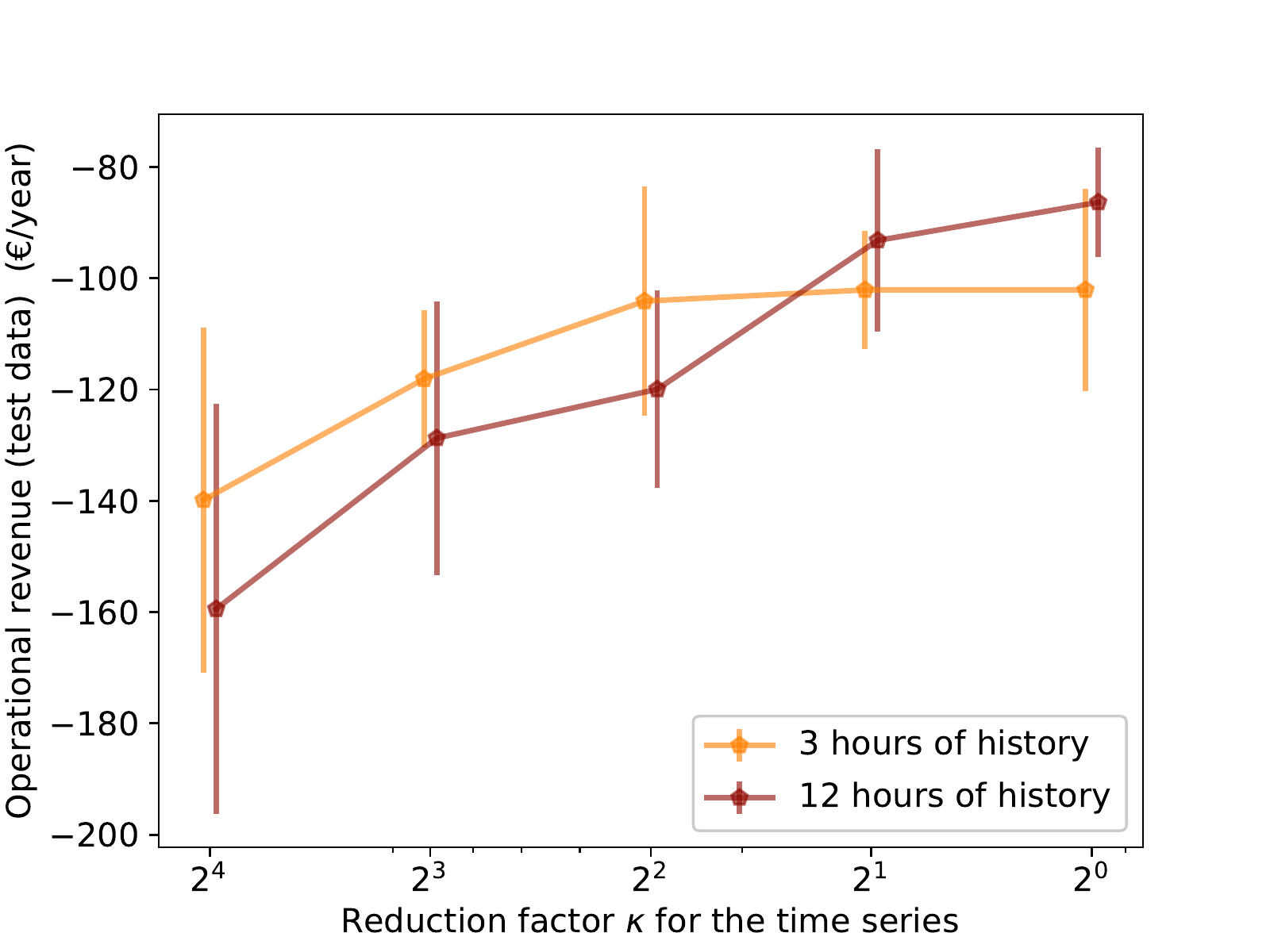}
        \caption{Evolution of the performance of the policy in the test setting (one year of unseen past time series for both the consumption and production). Using a relatively short history of observations is beneficial when the amount of data is limited while a longer history of observations leads to improved results when the amount of data increases (lower $\kappa$ means that more data is available). For each point, the mean and the standard deviation are calculated over $2^3$ different runs of the Deep Q-network algorithm (over different ways to split the time series defined in appendix and for different random seeds).}
        \label{fig:microgrid_benchmark}
\end{figure}

As already noted, there is a close link between (i)~choosing a function approximator and (ii)~selecting features, since the function approximator characterizes how the features will be treated into higher levels of abstraction (\textit{a fortiori} it can thus give more or less weight to some features). Notwithstanding, this example shows that, in general, the use of function approximators may still benefit from an efficient feature selection process. Indeed, we still observe a tradeoff between bias and overfitting in this microgrid application, even with the convolutional architecture of the deep Q-network. 

\section{Conclusion and Future Works}
\label{conclusion}

This paper discusses the bias-overfitting tradeoff of batch RL algorithms in the context of POMDPs. 
Most current RL works compare algorithms in an online setting, where the performance depends on many factors such as improved exploration, better generalization, or better numerical efficiency of the gradient step (in the case of deep RL). In contrast, this paper is focused on one source of sub-optimality related to generalization from limited data in the case of POMDPs.
In that context, we propose an analysis showing that, similarly to supervised learning techniques, RL may face a bias-overfitting dilemma in situations where the policy class is too large compared to the batch of data.
In such situations, we show that it may be preferable to concede an asymptotic bias in order to reduce overfitting.
This (favorable) asymptotic bias may be introduced through different manners: (i)~downsizing the state representation, (ii)~using specific types of function approximators and (iii)~lowering the discount factor.

The main theoretical results of this paper relate to the state representation; the originality of the setting proposed in this paper compared to \citeauthor{maillard2011selecting}~\citeyear{maillard2011selecting}, \citeauthor{ortner2014selecting}~\citeyear{ortner2014selecting} and the related work is mainly to formalize the problem in a batch setting (limited set of tuples) instead of the online setting, where they investigate the E/E dilemma. As compared to \citeauthor{jiang2015dependence}~\citeyear{jiang2015dependence}, the originality is to consider a partially observable setting. We introduce the notion of the $\epsilon$-sufficient statistics and showed that it allows us to formalize the intuition of the bias-overfitting trade-off in a rigorous way and provides new insights compared to the MDP case. In particular, the bound of Theorem \ref{theorem_bias} is a new result based on L1 error terms of the associated belief states. 
There are also interesting insights from the techniques used to derive that bound, where we use the formalism of bisimulation metrics and the data processing inequality.

The work proposed in this paper may also be of interest in online settings because, at each stage, obtaining a performant policy from given data is part of the solution to an efficient exploration/exploitation tradeoff. 
For instance, optimizing the bias-overfitting tradeoff suggests that it can be beneficial to dynamically adapt the feature space and the function approximator. This can be done through ad hoc regularization or by adapting the neural network architecture, using for instance the NET2NET transformation \cite{chen2015net2net}.


\subsection*{Acknowledgments}
We thank the anonymous reviewers for many helpful comments.
The authors thank the Walloon Region (Belgium) that funded this research in the context of the BATWAL project.
We also acknowledge financial support by the Natural Science and Engineering Research Council Canada (NSERC) and Samsung Electronics Co Ltd. Guillaume Rabusseau acknowledges support of an IVADO postdoctoral fellowship.

\bibliographystyle{theapa}
\bibliography{references}{} 

\newpage
\appendix
\section{Proofs}

\subsection{Proof of Proposition \ref{main_proposition}}
\label{main_proof}

\paragraph{Proposition \ref{main_proposition}}

Let $\phi_\epsilon$ be an $\epsilon$-sufficient mapping, and let $\phi_0$ be a sufficient mapping.
Then, for any $H^{(1)}, H^{(2)}$ such that $\phi_\epsilon(H^{(1)})=\phi_\epsilon(H^{(2)})$, we have
\begin{align*}
\underset{a}{\max} & \left|\mathcal Q^{\pi_{D_\infty, \phi_0}}_{\hat M_{D_\infty, \phi_0}}(\phi_0(H^{(1)}),a) -\mathcal Q^{\pi_{D_\infty, \phi_0}}_{\hat M_{D_\infty, \phi_0}}(\phi_0(H^{(2)}),a)\right| \quad \le \epsilon \frac{R_{max}}{(1-\gamma)}. \\
\end{align*}

\begin{proof}

\newcommand{\val}{\mathcal V}
\newcommand{\Qval}{\mathcal Q}
\newcommand{\QvalFreqMDP}[1]{\Qval^{#1}_{\hat M_{D_\infty, \phi_0}}}
\newcommand{\QvalOptFreqMDP}{\Qval^{\pi_{D_\infty, \phi_0}}_{\hat M_{D_\infty, \phi_0}}}
\newcommand{\valFreqMDP}[1]{\val^{#1}_{\hat M_{D_\infty, \phi_0}}}
\newcommand{\valOptFreqMDP}{\val^{\pi_{D_\infty, \phi_0}}_{\hat M_{D_\infty, \phi_0}}}
\newcommand{\Rmax}{R_{\mathit{max}}}
\newcommand{\esp}{\mathbb{E}}


%
%
%

We first define the notion of a policy conditioned on a previous history. 
Let $\pi:\phi_0(\mathcal H) \to \mathcal A$. 
Given a history $H\in\mathcal{H}$ we define $\pi_H : (\mathcal{A}\times\Omega)^* \to \mathcal{A}$
by $\pi_H( a_1 \omega_1\cdots a_k\omega_k ) = \pi(\phi_0(H a_1\omega_1 \cdots a_k \omega_k ))$, i.e. $\pi_H$ is the
policy $\pi$ conditioned on having previously observed the history $H$ \footnote{To simplify notations, one can consider that the rewards are included in the observations $\omega$.}.

Given a history $H\in\mathcal{H}$ 
we can then define the
Q-value of a state $s\in\mathcal S$ and an action $a\in\mathcal A$ in the underlying MDP following $\pi$
and assuming that $H$ has been observed:
\begin{align*}
\Qval^{\pi_H}(s,a)=&R'(s,a)\\
&+ \gamma \esp_{s_1\mid s,a}  \esp_{\omega_1\mid s_1}  R'(s_1,a_1=\pi_H(a\omega_1))\\
&+ \gamma^2 \esp_{s_1\mid s,a}  \esp_{\omega_1\mid s_1} \esp_{s_2\mid s_1,a_1}  \esp_{\omega_2\mid s_2} R'(s_2,\pi_H(a\omega_1 a_1\omega_2))\\
&+\ \cdots
\end{align*}
where $R'(s,a) = \esp_{s'\mid s,a} R(s,a,s')$.
Note that $\QvalFreqMDP{\pi}(\phi_0(H),a) = \mathbb{E}_{s\sim b_{\phi_0}(\cdot | \phi_0(H))} \Qval^{\pi_H}(s,a)$ for any history $H\in\mathcal{H}$ and
any action $a\in\mathcal A$  since
$\phi_0$ is a sufficient mapping.

We can now prove the proposition.
Let $a\in\mathcal A$.
Let $\pi^*= \pi_{D_\infty, \phi_0}$ be the optimal policy in $\hat{M}$ and
let $\pi_i = \pi^*_{H_i}$ be the optimal policy conditioned on having observed $H_i$ for $i=1,2$. 
%
Since $\pi^*$ is optimal we have
\begin{align*}
\Qval^{\pi^*}(\phi_0(H^{(1)}),a) & =  \mathbb{E}_{s\sim b_{\phi_0}(\cdot | \phi_0(H_1))} \Qval^{\pi_1}(s,a) \\
& \geq  \mathbb{E}_{s\sim b_{\phi_0}(\cdot | \phi_0(H_1))} \Qval^{\pi_2}(s,a)
\end{align*}
and
\begin{align*}
\Qval^{\pi^*}(\phi_0(H^{(2)}),a) & =  \mathbb{E}_{s\sim b_{\phi_0}(\cdot | \phi_0(H_2))} \Qval^{\pi_2}(s,a) \\
& \geq  \mathbb{E}_{s\sim b_{\phi_0}(\cdot | \phi_0(H_2))} \Qval^{\pi_1}(s,a).
\end{align*}
We then have
\begin{align*}
\QvalFreqMDP{\pi^*} (\phi_0(H^{(1)}),a) & - \QvalFreqMDP{\pi^*}(\phi_0(H^{(2)}),a)\\
&=
\mathbb{E}_{s\sim b_{\phi_0}(\cdot | \phi_0(H_1))} \Qval^{\pi_1}(s,a) - \mathbb{E}_{s\sim b_{\phi_0}(\cdot | \phi_0(H_2))} \Qval^{\pi_2}(s,a)\\
& \le
 \mathbb{E}_{s\sim b_{\phi_0}(\cdot | \phi_0(H_1))} \Qval^{\pi_1}(s,a) - \mathbb{E}_{s\sim b_{\phi_0}(\cdot | \phi_0(H_2))} \Qval^{\pi_1}(s,a)  \\
& =
\sum_{s\in\mathcal S} \left(b(s | H^{(1)}) -  b(s | H^{(2)})\right) \Qval^{\pi_1}(s,a)  \\
& \leq
\frac{\epsilon}{1-\gamma} \Rmax,
\end{align*}
where we used the facts that $b(\cdot | H^{(1)}) - b(\cdot | H^{(2)})$ is a vector of $L_1$-norm less than
$2\epsilon$~(by Lemma~\ref{dist_epsilon}) whose components sum to zero and $|\Qval^{\pi_1}(s,a) -\Qval^{\pi_1}(s',a)| \leq \frac{\Rmax}{1-\gamma}$ for any $s,s'\in\mathcal S$.

Applying the same argument \emph{mutatis mutandis}
we obtain $\QvalFreqMDP{\pi^*}  (\phi_0(H^{(2)}),a) - \QvalFreqMDP{\pi^*}(\phi_0(H^{(1)}),a) \leq \frac{\epsilon}{1-\gamma} \Rmax$
from which the result follows.

\end{proof}

\begin{proof}[Alternative proof]
An alternative proof of independent interest makes use of the formalism of the bisimulation metric \cite{ferns2004metrics} along with the data processing inequality. 
Let us consider $d \in \mathcal M$, where $\mathcal M$ is the set of all semi-metrics on $\Sigma_0$ with distance at most 1. We fix a particular $d$ as follows:
$\forall \sigma^{(1)}, \sigma^{(2)} \in \Sigma_0 $:
\begin{equation*}
\begin{split}
d(\sigma^{(1)},\sigma^{(2)}) & =\underset{a}{\max} \left( c_R\left\lvert \hat R'\left(\sigma^{(1)},a\right) - \hat R'\left(\sigma^{(2)},a\right)\right\lvert \right. \\
& \left. + c_T d_P \big(\hat T(\sigma^{(1)},a,\cdot), \hat T(\sigma^{(2)},a,\cdot)\big) \right),
\end{split}
\end{equation*}
where $d_P$ is some probability metric, and 
where $c_R,c_T\ge0$ are such that $c_R+c_T\le1$.
We define $F : \mathcal M  \rightarrow \mathcal M$ by
\begin{equation*}
\begin{split}
F(d)(\sigma^{(1)},\sigma^{(2)}) & =\underset{a}{\max} \left( c_R \left\lvert \hat R'\left(\sigma^{(1)},a\right)- \hat R'\left(\sigma^{(2)},a\right)\right\lvert \right. \\
& \left. + c_T T_K(d) \big(\hat T(\sigma^{(1)},a,\cdot), \hat T(\sigma^{(2)},a,\cdot)\big) \right).\\
\end{split}
\end{equation*}
where $T_K(d)$ is the Kantorovich metric induced by $d$.
From Lemma \ref{bisimulation_fixed}, $F$ has a least fixed-point $d_{fix}$, and it is a bisimulation metric.
From Lemmas \ref{lemmaR}, \ref{lemmaT} (using the data processing inequality), we have $\forall \sigma^{(1)}, \sigma^{(2)} \in \Sigma_0$ that
$$d_{fix}(\sigma^{(1)},\sigma^{(2)}) \le \frac{1}{2} \lVert b(\cdot \mid \sigma^{(1)})-b(\cdot \mid \sigma^{(2)}) \lVert_1 R_{max}.$$ 
Using Lemma \ref{bisimulation_Q}, it follows that
\begin{equation*}
\begin{split}
& \underset{a}{\max} \left|\mathcal Q^{\pi_{D_\infty, \phi_0}}_{\hat M_{D_\infty, \phi_0}}(\phi_0(H^{(1)}),a) -\mathcal Q^{\pi_{D_\infty, \phi_0}}_{\hat M_{D_\infty, \phi_0}}(\phi_0(H^{(2)}),a)\right| \le \epsilon \frac{R_{max}}{(1-\gamma)}. \\
\end{split}
\end{equation*}

\end{proof}

\begin{lemma}
\label{dist_epsilon}
Let
$H^{(1)}, H^{(2)} \in H$ be two histories such that $\phi_\epsilon(H^{(1)}) =
\phi_\epsilon(H^{(2)})$. Then,
$$\lVert b(\cdot| H^{(1)}) - b(\cdot | H^{(2)}) \lVert_1 \le 2 \epsilon.$$
\end{lemma}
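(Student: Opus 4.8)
The plan is to prove this by a straightforward triangle inequality, using the belief state associated with the common $\epsilon$-sufficient statistic as an intermediate anchor point. The essential observation is that because $\phi_\epsilon(H^{(1)}) = \phi_\epsilon(H^{(2)})$, the two histories share the \emph{same} value $\varphi := \phi_\epsilon(H^{(1)}) = \phi_\epsilon(H^{(2)})$ under the mapping, and hence the same $\phi$-induced belief state $b_{\phi_\epsilon}(\cdot \mid \varphi)$ (which by definition depends only on the statistic $\varphi$, not on the underlying history).

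First I would invoke Definition \ref{eps-sufficient_stat} twice, once for each history. Since $\phi_\epsilon$ is an $\epsilon$-sufficient mapping, it provides an $\epsilon$-sufficient statistic for every history, so in particular
$$\left\lVert b_{\phi_\epsilon}(\cdot \mid \varphi) - b(\cdot \mid H^{(1)}) \right\lVert_1 \le \epsilon \qquad\text{and}\qquad \left\lVert b_{\phi_\epsilon}(\cdot \mid \varphi) - b(\cdot \mid H^{(2)}) \right\lVert_1 \le \epsilon.$$

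Then I would apply the triangle inequality for the $L_1$ norm, inserting $b_{\phi_\epsilon}(\cdot \mid \varphi)$ between the two terms of interest:
$$\left\lVert b(\cdot \mid H^{(1)}) - b(\cdot \mid H^{(2)}) \right\lVert_1 \le \left\lVert b(\cdot \mid H^{(1)}) - b_{\phi_\epsilon}(\cdot \mid \varphi) \right\lVert_1 + \left\lVert b_{\phi_\epsilon}(\cdot \mid \varphi) - b(\cdot \mid H^{(2)}) \right\lVert_1 \le \epsilon + \epsilon = 2\epsilon,$$
which is exactly the claimed bound. There is no genuine obstacle here: the only point requiring care is recognizing that the hypothesis $\phi_\epsilon(H^{(1)}) = \phi_\epsilon(H^{(2)})$ is precisely what collapses the two $\phi$-induced belief states to a single common vector, making it a valid intermediate term. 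Everything else is the definition of $\epsilon$-sufficiency combined with the metric triangle inequality.
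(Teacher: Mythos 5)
Your proof is correct and follows exactly the same route as the paper's: insert the common $\phi_\epsilon$-induced belief state $b_{\phi_\epsilon}(\cdot \mid \phi_\epsilon(H^{(1)})) = b_{\phi_\epsilon}(\cdot \mid \phi_\epsilon(H^{(2)}))$ as an intermediate term, apply the triangle inequality, and bound each piece by $\epsilon$ via Definition~\ref{eps-sufficient_stat}. Nothing is missing.
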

\begin{proof}
\begin{equation*}
\begin{split}
\left\lVert b(\cdot \mid H^{(1)}) - \right. \left. b(\cdot \mid H^{(2)}) \right\lVert_1 = & \left\lVert  b(\cdot \mid H^{(1)}) - b_{\phi_\epsilon}\left(\cdot \mid \phi_\epsilon(H^{(1)})\right) \right. \\
& \left. + b_{\phi_\epsilon}\left(\cdot \mid \phi_\epsilon(H^{(2)})\right) - b(\cdot \mid H^{(2)})  \right\lVert_1 \\
\le & \left\lVert  b(\cdot \mid H^{(1)}) - b_{\phi_\epsilon}\left(\cdot \mid \phi_\epsilon(H^{(1)})\right)  \right\lVert_1\\
& + \left\lVert b_{\phi_\epsilon}\left(\cdot \mid \phi_\epsilon(H^{(2)})\right) - b(\cdot \mid H^{(2)})  \right\lVert_1 \\
\le & 2 \epsilon,
\end{split}
\end{equation*}

\end{proof}

\begin{lemma}
\label{bisimulation_fixed}
Let $c_R,c_T\ge0$ and $c_R+c_T\le1$. Define $F : \mathcal M  \rightarrow \mathcal M$ by
\begin{equation*}
\begin{split}
& F(d)(\sigma^{(1)},\sigma^{(2)})=\underset{a}{\max} \left( c_R \left\lvert \hat R'\left(\sigma^{(1)},a\right) - \hat R'\left(\sigma^{(2)},a\right)\right\lvert \right.\\
& \left. + c_T T_K(d)\big(\hat T(\sigma^{(1)},a,\cdot), \hat T(\sigma^{(2)},a,\cdot)\big) \right) .\\
\end{split}
\end{equation*}
Then F has a least fixed-point, $d_{fix}$, and $d_{fix}$ is a bisimulation metric.
\end{lemma}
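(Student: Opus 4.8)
The plan is to obtain $d_{fix}$ via the Knaster--Tarski fixed-point theorem and then identify it with a bisimulation metric in the sense of \citeauthor{ferns2004metrics}~\citeyear{ferns2004metrics}. First I would equip $\mathcal M$ with the pointwise partial order $d \preceq d'$ defined by $d(\sigma^{(1)},\sigma^{(2)}) \le d'(\sigma^{(1)},\sigma^{(2)})$ for all $\sigma^{(1)},\sigma^{(2)} \in \Sigma_0$, and check that $(\mathcal M,\preceq)$ is a complete lattice. The point is that every family $\{d_i\} \subseteq \mathcal M$ has a pointwise supremum $\sup_i d_i$ that again lies in $\mathcal M$: it is bounded by $1$, symmetric, and vanishes on the diagonal, and the triangle inequality survives the supremum since $\sup_i d_i(\sigma^{(1)},\sigma^{(3)}) \le \sup_i d_i(\sigma^{(1)},\sigma^{(2)}) + \sup_i d_i(\sigma^{(2)},\sigma^{(3)})$. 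Since arbitrary suprema exist in $\mathcal M$ (the empty supremum being the identically-zero semi-metric), $\mathcal M$ is a complete lattice.

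Next I would verify that $F$ is a well-defined self-map of $\mathcal M$. For any $d \in \mathcal M$, $F(d)$ vanishes on the diagonal and is symmetric by inspection, and it satisfies the triangle inequality because both $|\hat R'(\cdot,a)-\hat R'(\cdot,a)|$ and the Kantorovich metric $T_K(d)(\cdot,\cdot)$ are themselves (semi-)metrics: the per-action summand obeys the triangle inequality, and taking the maximum over $a$ preserves it. Boundedness by $1$ is exactly where the hypothesis $c_R+c_T\le 1$ is used, together with the facts that $T_K(d)\le 1$ whenever $d\le 1$ and that the (normalized) reward differences lie in $[0,1]$, so each summand is at most $c_R+c_T\le 1$.

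The key analytic step is monotonicity of $F$. The reward term is independent of $d$, so it suffices to show that $d \preceq d'$ implies $T_K(d)(P,Q) \le T_K(d')(P,Q)$ for all distributions $P,Q$. Using the coupling formulation $T_K(d)(P,Q) = \min_{\lambda \in \Lambda(P,Q)} \sum_{\sigma,\sigma'} \lambda(\sigma,\sigma')\, d(\sigma,\sigma')$, if $\lambda^\star$ is optimal for $d'$ then $T_K(d)(P,Q) \le \sum \lambda^\star d \le \sum \lambda^\star d' = T_K(d')(P,Q)$. This monotonicity lifts through the nonnegative coefficient $c_T$ and the maximum over actions to give $F(d)\preceq F(d')$. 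With $F$ a monotone self-map on the complete lattice $\mathcal M$, Knaster--Tarski yields a least fixed point $d_{fix}=\bigwedge\{d : F(d)\preceq d\}$.

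Finally, to conclude that $d_{fix}$ is a \emph{bisimulation} metric I would appeal to \citeauthor{ferns2004metrics}~\citeyear{ferns2004metrics}: the operator $F$ is precisely their Kantorovich-based functional for the finite MDP $\hat M_{D_\infty,\phi_0}$, whose least fixed point is a pseudometric whose kernel $\{(\sigma^{(1)},\sigma^{(2)}) : d_{fix}(\sigma^{(1)},\sigma^{(2)})=0\}$ coincides with bisimulation equivalence. For a self-contained argument I would show that the zero-distance pairs form a bisimulation relation (matching immediate rewards and transition distributions under the induced Kantorovich metric) and conversely that bisimilar states receive distance zero from every prefixed point, hence from $d_{fix}$. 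I expect this kernel-versus-bisimilarity identification, which reconciles the recursive structure of $F$ with the coinductive definition of bisimulation, to be the main obstacle; the lattice machinery and the monotonicity of $T_K$ are routine once set up.
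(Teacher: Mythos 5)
Your proof is correct and takes essentially the same route as the paper, whose entire proof is a citation to Theorem 4.5 of \citeauthor{ferns2004metrics}~\citeyear{ferns2004metrics}: the Knaster--Tarski argument on the complete lattice of $1$-bounded semi-metrics, with monotonicity of the Kantorovich operator established via the coupling formulation, is precisely the argument given there. The only caveats are minor and inherited from the lemma's statement rather than introduced by you: boundedness of $F(d)$ by $1$ requires the rewards to be normalized (as you note parenthetically), and identifying the kernel of $d_{fix}$ with bisimilarity requires $c_R>0$, since otherwise the identically-zero semi-metric is already the least fixed point.
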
          
\begin{proof}
See proof of Theorem 4.5 by \citeauthor{ferns2004metrics}~\citeyear{ferns2004metrics}.
\end{proof}

\begin{lemma}
\label{lemmaR}
Let $H^{(1)}, H^{(2)} \in \mathcal H$ such that
$$\lVert b(\cdot \mid \phi_0(H^{(1)})-b(\cdot \mid \phi_0(H^{(2)}) \lVert_1 \le 2 \epsilon.$$ 
Then
$$\underset{a}{\max} \left| \hat R'\left(\phi_0(H^{(1)}),a\right)-\hat R'\left(\phi_0(H^{(2)}),a\right) \right| \le \epsilon R_{max}.$$
\end{lemma}
\begin{proof}
This follows directly from the fact that $\forall H$:
$$\hat R'(\phi_0(H),a)=\sum_{s \in \mathcal S}\mathbb P(s|\phi_0(H)) R'(s,a),$$
where  $R'(s,a)=\mathbb E_{s'\mid s,a}R(s,a,s')$.
\end{proof}

\begin{lemma}
\label{lemmaT}
Let $H^{(1)}, H^{(2)} \in \mathcal H$ such that
$$\left\lVert b(\cdot \mid \phi_0(H^{(1)})-b(\cdot \mid \phi_0(H^{(2)}) \right\lVert_1 \le 2 \epsilon.$$ 
Then
\begin{equation*}
\begin{split}
\sum_{\omega \in \Omega} & \left\lVert \mathbb P\left(\omega \mid \phi_0(H^{(1)}),a\right) b(\cdot \mid \phi_0(H^{(1)}),a,\omega) \right. \\
& \left. - \mathbb P\left(\omega \mid \phi_0(H^{(2)}),a\right) b(\cdot \mid \phi_0(H^{(2)}),a,\omega) \right\lVert_1 \le 2\epsilon.
\end{split}
\end{equation*}
\end{lemma}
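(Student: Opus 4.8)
The plan is to recognize the left-hand side as the $L_1$ distance between two joint distributions over (next-state, next-observation) pairs, and then invoke the data processing inequality — exactly the tool flagged in the alternative proof of Proposition~\ref{main_proposition}.

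First I would rewrite the summand. For any history $H$, action $a$, next state $s'$ and observation $\omega$, the product $\mathbb P(\omega \mid \phi_0(H),a)\, b(s' \mid \phi_0(H),a,\omega)$ is precisely the joint probability of transitioning to $s'$ and then observing $\omega$, starting from the belief $b(\cdot \mid \phi_0(H))$ and taking action $a$; concretely it factors as
$$\mathbb P(s',\omega \mid \phi_0(H),a) = \sum_{s \in \mathcal S} b(s \mid \phi_0(H))\, T(s,a,s')\, O(s',\omega).$$
The Bayesian belief update is nothing but this joint renormalized, so the normalizing factor $\mathbb P(\omega \mid \phi_0(H),a)$ cancels and we are left with the joint itself. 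Next I would note that summing the $L_1$-norm (taken over $s'$) across all $\omega \in \Omega$ coincides with a single $L_1$-norm over the joint index $(s',\omega)$, so the left-hand side equals $\lVert \mathbb P(\cdot,\cdot \mid \phi_0(H^{(1)}),a) - \mathbb P(\cdot,\cdot \mid \phi_0(H^{(2)}),a) \rVert_1$.

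The crux is then to view the map $b \mapsto \mathbb P(\cdot,\cdot \mid b,a)$ as a single application of a \emph{fixed} row-stochastic kernel $K_a$ with entries $K_a\big(s,(s',\omega)\big) = T(s,a,s')\,O(s',\omega)$. This kernel is stochastic because $\sum_{s',\omega} T(s,a,s')\,O(s',\omega) = \sum_{s'} T(s,a,s') = 1$, and crucially it does not depend on which history we started from. Writing $b^{(i)} = b(\cdot \mid \phi_0(H^{(i)}))$, the $L_1$-contraction of probability vectors under a stochastic matrix (the data processing inequality for total variation) yields
$$\lVert b^{(1)} K_a - b^{(2)} K_a \rVert_1 \le \lVert b^{(1)} - b^{(2)} \rVert_1 \le 2\epsilon,$$
where the final inequality is the hypothesis of the lemma, and this holds uniformly in $a$.

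I expect the only delicate step to be the first one: correctly identifying the weighted next-belief as a joint distribution and verifying that it arises from the belief state by one application of a history-independent stochastic kernel. Once that identification is pinned down, the contraction property of the kernel closes the argument immediately, and no case analysis or routine estimation is required.
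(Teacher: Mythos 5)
Your proof is correct and follows essentially the same route as the paper's: both identify the left-hand side as the $L_1$ distance between the joint distributions over $(s',\omega)$ and then invoke the data processing inequality for total variation. The only cosmetic difference is that you contract through the single composite kernel $K_a\big(s,(s',\omega)\big) = T(s,a,s')\,O(s',\omega)$ in one step, whereas the paper applies the inequality twice (first through the transition kernel, then through the observation kernel); your one-shot version, with the explicit verification that $K_a$ is row-stochastic, is if anything slightly cleaner.
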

\begin{proof}
For any $a \in \mathcal A$ by direct application of the data processing inequality (DPI) in the special case of the total variation:
$$\left\lVert b\left(.|\phi_0(H^{(1)}),a\right) - b\left(.|\phi_0(H^{(2)}),a\right) \right\lVert_1 \le 2\epsilon.$$

Moreover, as illustrated on Figure \ref{fig:DPI} and once more by application of the DPI in the context of the special case of the total variation, we have that
\begin{equation*}
\begin{split}
& \frac{1}{2} \mathbb E_{Q_Y} \left| \frac{P_Y}{Q_Y} -1 \right| \le \frac{1}{2} \mathbb E_{Q_X} \left| \frac{P_X}{Q_X} -1 \right|.
\end{split}
\end{equation*}
with the distributions
\begin{itemize}
\item $P_X=\mathbb P\left(s|\phi_0(H^{(1)}),a\right)$, 
\item $Q_X=\mathbb P\left(s|\phi_0(H^{(2)}),a\right)$, 
\item $P_Y=\mathbb P\left(\omega,s \mid \phi_0(H^{(1)}),a\right)$, and
\item $Q_Y=\mathbb P\left(\omega,s \mid \phi_0(H^{(2)}),a\right)$.
\end{itemize}
Note that the output distributions is on the tuple $(\omega,s)$.

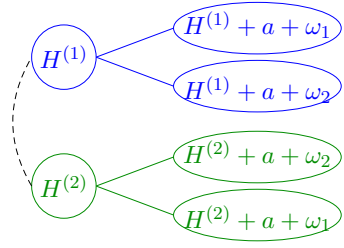
\begin{figure}[ht!]
\centering
\resizebox{0.3\textwidth}{!}{%
\begin{tikzpicture}[
dot/.style    = {anchor=base,fill,circle,minimum size=0.1cm, inner sep=0.2pt}]

\node [xshift=0cm, blue] (dot1_belief) at (0,1) {$H^{(1)}$};
\node [xshift=0cm, black!45!green] (dot2_belief) at (0,-1) {$H^{(2)}$};

\node [xshift=0cm, blue] (dot1_belief) at (3,1.5) {$H^{(1)}+a+\omega_1$};
\node [xshift=0cm, blue] (dot2_belief) at (3,0.5) {$H^{(1)}+a+\omega_2$};

\node [xshift=0cm, black!45!green] (dot1_belief) at (3,-1.5) {$H^{(2)}+a+\omega_1$};
\node [xshift=0cm, black!45!green] (dot2_belief) at (3,-0.5) {$H^{(2)}+a+\omega_2$};

\draw [blue](0,1) circle (0.5cm);
\draw [black!45!green](0,-1) circle (0.5cm);
\draw [blue](3,1.45) ellipse (1.3cm and 0.4cm);
\draw [blue](3,0.55) ellipse (1.3cm and 0.4cm);
\draw [black!45!green](3,-1.45) ellipse (1.3cm and 0.4cm);
\draw [black!45!green](3,-0.55) ellipse (1.3cm and 0.4cm);

\draw[blue] (.5,1) -- (1.7,1.45);
\draw[blue] (.5,1) -- (1.7,0.55);
\draw[black!45!green] (.5,-1) -- (1.7,-1.45);
\draw[black!45!green] (.5,-1) -- (1.7,-0.55);

\draw[densely dashed]    (-0.5,-1) to[out=120,in=-120] (-0.5,1);

\end{tikzpicture}
}
\caption{Illustration of the two histories $H^{(1)}$ and $H^{(2)}$. Here we illustrate only for one given action $a$ and two possible observations $\omega_1$ and $\omega_2$. The dashed line illustrates the key relation between $H^{(1)}$ and $H^{(2)}$, which is $\left\lVert b(\cdot \mid H^{(1)}) - b(\cdot \mid H^{(2)}) \right\lVert_1 \le 2 \epsilon$. In expectation, this relation is preserved through the action and observations.}
\label{fig:DPI}
\end{figure}

Since the right hand-side in the aforementioned DPI can be bounded by $\epsilon$:
\begin{equation*}
\begin{split}
& \frac{1}{2} \sum_{s \in \mathcal S} \mathbb P\left(s|\phi_0(H^{(2)}),a\right) \left| \frac{\mathbb P\left(s|\phi_0(H^{(1)}),a\right)}{\mathbb P\left(s|\phi_0(H^{(2)}),a\right)}-1 \right| \le \epsilon,
\end{split}
\end{equation*}
we have
\begin{equation*}
\begin{split}
\frac{1}{2} \sum_{\omega \in \Omega} \sum_{s \in \mathcal S} & \left| \mathbb P\left(\omega \mid \phi_0(H^{(1)}),a\right) \mathbb P\left(s \mid \phi_0(H^{(1)}),a,\omega \right) \right. \\
& \left. - \mathbb P\left(\omega \mid \phi_0(H^{(2)}),a\right) \mathbb P\left(s \mid \phi_0(H^{(2)}),a,\omega \right) \right| \le \epsilon.
\end{split}
\end{equation*}
This can be rewritten as:
\begin{equation*}
\begin{split}
\sum_{\omega \in \Omega} & \left\lVert \mathbb P\left(\omega \mid \phi_0(H^{(1)}),a\right) b(\cdot \mid \phi_0(H^{(1)}),a,\omega) \right. \\
& \left. - \mathbb P\left(\omega \mid \phi_0(H^{(2)}),a\right) b(\cdot \mid \phi_0(H^{(2)}),a,\omega) \right\lVert_1 \le 2\epsilon.
\end{split}
\end{equation*}
\end{proof}

\begin{lemma}
\label{bisimulation_Q}
Suppose $\gamma \le c_T$. Then $\forall \sigma^{(1)}, \sigma^{(2)} \in \Sigma_0$:
$$c_R \max_{a} \left| \mathcal Q^*(\sigma^{(1)},a)-\mathcal Q^*(\sigma^{(2)},a) \right| \le d_{fix}(\sigma^{(1)},\sigma^{(2)})$$

where $\mathcal Q^*$ stands for $\mathcal Q^{\pi_{D_\infty, \phi_0}}_{\hat M_{D_\infty, \phi_0}}$.
\end{lemma}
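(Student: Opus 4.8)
The plan is to prove the claim by relating the fixed-point construction of $d_{fix}$ to value iteration for $\mathcal Q^*$. Recall from Lemma~\ref{bisimulation_fixed} that $d_{fix}$ is the least fixed point of the monotone operator $F$, and hence is the limit of the iterates $d_n = F^n(d_0)$ started from the zero semimetric $d_0 \equiv 0$. In parallel, I would introduce the value-iteration sequence $V_0\equiv 0$ and $\mathcal Q_{n+1}(\sigma,a)=\hat R'(\sigma,a)+\gamma\sum_{\sigma'}\hat T(\sigma,a,\sigma')V_n(\sigma')$ with $V_n(\sigma)=\max_a \mathcal Q_n(\sigma,a)$, which converges to $\mathcal Q^*$ and $V^*=\max_a\mathcal Q^*(\cdot,a)$ since the Bellman optimality operator is a $\gamma$-contraction. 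The core of the argument is then the single inductive claim that for every $n$ and every pair $\sigma^{(1)},\sigma^{(2)}\in\Sigma_0$,
\[ c_R\max_a\left|\mathcal Q_n(\sigma^{(1)},a)-\mathcal Q_n(\sigma^{(2)},a)\right|\le d_n(\sigma^{(1)},\sigma^{(2)}), \]
from which the lemma follows by letting $n\to\infty$.

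For the inductive step I would first note that the induction hypothesis, together with the elementary bound $|\max_a f-\max_a g|\le\max_a|f-g|$, gives that $c_R V_n$ is $1$-Lipschitz with respect to $d_n$. Next I would decompose $\mathcal Q_{n+1}(\sigma^{(1)},a)-\mathcal Q_{n+1}(\sigma^{(2)},a)$ into the reward difference $\hat R'(\sigma^{(1)},a)-\hat R'(\sigma^{(2)},a)$ and the transition term $\gamma\big(\sum_{\sigma'}\hat T(\sigma^{(1)},a,\sigma')V_n(\sigma')-\sum_{\sigma'}\hat T(\sigma^{(2)},a,\sigma')V_n(\sigma')\big)$. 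The transition term is exactly $\gamma/c_R$ times the difference of the expectations of the $1$-Lipschitz function $c_R V_n$ under the two distributions $\hat T(\sigma^{(1)},a,\cdot)$ and $\hat T(\sigma^{(2)},a,\cdot)$, so by the Kantorovich (dual) characterization of $T_K(d_n)$ it is bounded in absolute value by $(\gamma/c_R)\,T_K(d_n)(\hat T(\sigma^{(1)},a,\cdot),\hat T(\sigma^{(2)},a,\cdot))$. Multiplying through by $c_R$, invoking $\gamma\le c_T$ to replace $\gamma$ by $c_T$, and taking the maximum over $a$ turns the right-hand side into precisely $F(d_n)(\sigma^{(1)},\sigma^{(2)})=d_{n+1}(\sigma^{(1)},\sigma^{(2)})$, which closes the induction (the corresponding bound for $V_{n+1}$ then follows again from the max inequality).

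I expect the main obstacle to be the careful use of the Kantorovich duality: one has to verify that $c_R V_n$ has Lipschitz seminorm at most one with respect to $d_n$ (which is where the factor $c_R$ in the statement originates) and to apply the dual representation $T_K(d)(P,Q)=\sup_{\mathrm{Lip}_d(f)\le 1}|\mathbb E_P f-\mathbb E_Q f|$ in the correct direction. A second technical point is justifying the passage to the limit: since $F^n(d_0)$ increases to $d_{fix}$ and $\mathcal Q_n\to\mathcal Q^*$ uniformly (both guaranteed by the contraction and monotonicity already established in Lemma~\ref{bisimulation_fixed}), the per-step inequality survives in the limit and yields $c_R\max_a|\mathcal Q^*(\sigma^{(1)},a)-\mathcal Q^*(\sigma^{(2)},a)|\le d_{fix}(\sigma^{(1)},\sigma^{(2)})$, as required.
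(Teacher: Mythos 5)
Your proposal is correct and follows essentially the same route as the paper: an induction coupling the value-iteration sequence $\mathcal Q_n$ with the metric iterates $d_n = F^n(d_0)$, where the transition term is handled by observing that the suitably rescaled $\mathcal V_n$ is feasible for the Kantorovich linear program (your Lipschitz/dual phrasing and the paper's ``feasible solution to the primal LP'' are the same fact), and then passing to the limit. The only difference is presentational: you make the base point $d_0\equiv 0$ and the monotone convergence to the least fixed point explicit, which the paper leaves implicit.
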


\begin{proof}
The proof is done using the same techniques than the proof of Theorem 5.1 by \citeauthor{ferns2004metrics}~\citeyear{ferns2004metrics}.
Let us first prove that
$$\max_{a} c_R\left| \mathcal Q_n(\sigma^{(1)},a)-\mathcal Q_n(\sigma^{(2)},a) \right| \le d_n(\sigma^{(1)},\sigma^{(2)})$$
where 
$F(d_{n}(\sigma^{(1)},\sigma^{(2)})) = d_{n+1}(\sigma^{(1)},\sigma^{(2)})$, and
\begin{equation*}
\begin{split}
\mathcal Q_{n+1} (\phi_0(H),a)&=\hat R'(\phi_0(H),a) +\gamma \sum_{\varphi \in \phi_0(\mathcal H)} \hat T(\phi_0(H),a,\varphi) \mathcal V_{n}(\varphi),
\end{split}
\end{equation*}
with $\mathcal V_{n}(\varphi)=\underset{a'}{\max} \mathcal Q_{n} (\varphi,a')$. The result would then follow by taking the limit.

$\forall H^{(1)}, H^{(2)} \in \mathcal H$: $\phi_\epsilon(H^{(1)})=\phi_\epsilon(H^{(2)})$, it follows that
\begin{small}
\begin{equation*}
\begin{split}
c_R ~ \underset{a}{\max} & \left|\mathcal Q_{n+1}\left(\phi_0(H^{(1)}),a\right)-\mathcal Q_{n+1}\left(\phi_0(H^{(2)}),a\right)\right| \\
& \le c_R \underset{a}{\max} \Bigg( \left| \hat R'\left(\phi_0(H^{(1)}),a\right)- \hat R'\left(\phi_0(H^{(2)}),a\right)\right|\\
& + \gamma \left| \hat T(\phi_0(H^{(1)}),a,\cdot) \mathcal V_n\left(\cdot)\right) \right. - \left. \hat T(\phi_0(H^{(1)}),a,\cdot) \mathcal V_n \left(\cdot)\right) \right| \Bigg)\\
& \le \underset{a}{\max} \Bigg( c_R \left| \hat R'\left(\phi_0(H^{(1)}),a\right)- \hat R'\left(\phi_0(H^{(2)}),a\right)\right|\\
& + c_T \left| \left( \hat T(\phi_0(H^{(1)}),a,\cdot) - \hat T(\phi_0(H^{(2)}),a,\cdot) \right) \frac{c_R \gamma}{c_T} \right. \left. \mathcal V_n(\cdot) \right| \Bigg)\\
& \le \underset{a}{\max} \Bigg( c_R \left| \hat R'\left(\phi_0(H^{(1)}),a\right)- \hat R'\left(\phi_0(H^{(2)}),a\right)\right|\\
& + c_T T_K(d_n) \left( \hat T(\phi_0(H^{(1)}),a,\cdot) - \hat T(\phi_0(H^{(2)}),a,\cdot) \right)\Bigg) \\
& = F\left(d_{n}\big(\phi_0(H^{(1)}),\phi_0(H^{(2)})\big)\right) \\
& = d_{n+1}\left(\phi_0(H^{(1)}),\phi_0(H^{(2)})\right),
\end{split}
\end{equation*}
\end{small}
where we have used the fact that $\{\frac{c_R \gamma}{c_T} \mathcal V_n(u):u \in \Sigma\}$ is a feasible solution to the primal LP defined by the Kantorovich distance.
\end{proof}

\subsection{Proof of Theorem \ref{union_bound}} 
\label{app:union_bound}
\paragraph{Sketch of the proof}
The idea of the proof is first to bound the difference between value functions following different policies by a bound between value functions estimated in different environments but following the same policy (more precisely a max over a set of policies of such a bound). Once that is done, a bound in probability using Hoeffding's inequality can be obtained.

Let us denote $\varphi \in \phi(\mathcal H)$:
\begin{equation*}
\begin{split}
V_M^{\pi_{D_\infty, \phi}} (\varphi) -V_M^{\pi_{D, \phi}}(\varphi) = & (V_M^{\pi_{D_\infty, \phi}}(\varphi) -\mathcal V_{\hat M_{D}}^{\pi_{D_\infty, \phi}} (\varphi) )
 - (V_M^{\pi_{D, \phi}}(\varphi) - \mathcal V_{\hat M_{D}}^{\pi_{D, \phi}} (\varphi) ) \\
 & + (\mathcal V_{\hat M_{D}}^{\pi_{D_\infty, \phi}}(\varphi) - \mathcal V_{\hat M_{D}}^{\pi_{D, \phi}}(\varphi) ) \\
\le & (V_M^{\pi_{D_\infty, \phi}}(\varphi) - \mathcal V_{\hat M_{D}}^{\pi_{D_\infty, \phi}}(\varphi) ) - (V_M^{\pi_{D, \phi}}(\varphi) - \mathcal V_{\hat M_{D}}^{\pi_{D, \phi}}(\varphi) ) \\
\le & \ 2 \underset{\pi \in \{\pi_{D_\infty, \phi},\pi_{D, \phi}\} }{\max} \left| V_{M}^{\pi}(\varphi) - \mathcal V_{\hat M_{D}}^{\pi}(\varphi) \right|. \\
\end{split}
\end{equation*}

It follows that $\forall H$
\begin{small}
\begin{equation*}
\begin{split}
V_M^{\pi_{D_\infty, \phi}} (\varphi) -V_M^{\pi_{D, \phi}} (\varphi) \le & \ 2 \underset{ \pi \in \{\pi_{D_\infty, \phi},\pi_{D, \phi}\} \ }{\max} 
 \underset{\varphi \in \phi(\mathcal H)}{\max} \left| Q_{M}^{\pi}(\varphi,\pi(\varphi)) - \mathcal Q_{\hat M_{D}}^{\pi}(\varphi,\pi(\varphi)) \right|  \\
\le & \ 2 \underset{ \pi \in \{\pi_{D_\infty, \phi},\pi_{D, \phi}\} \ }{\max}
 \underset{\varphi \in \phi(\mathcal H),a \in \mathrm A
 }{\max} \left| Q_{M}^{\pi}(\varphi,a) - \mathcal Q_{\hat M_{D}}^{\pi}(\varphi,a) \right|,  \\
\end{split}
\end{equation*}
\end{small}
where $Q_{M}^{\pi}(\varphi,a)$ is the action-value function for policy $\pi$ in $M$ with $a \in \mathcal A$. 
Similarly $\mathcal Q^\pi_{\hat M}(\varphi,a)$ is the action-value function for policy $\pi$ in $\hat M$.

By Lemma \ref{partial}, we have:
\begin{equation*}
\begin{split}
V_M^{\pi_{D_\infty, \phi}} (\varphi) -V_M^{\pi_{D, \phi}}(\varphi) & \qquad \le \frac{2}{1-\gamma} \underset{ \pi \in \{\pi_{D_\infty, \phi},\pi_{D, \phi}\} \ }{\max} \underset{\varphi \in \phi(\mathcal H), a \in A
}{\max} \\
& \left| \hat{R'}(\varphi,a) + \gamma \sum_{\varphi' \in \phi(\mathcal H)} \hat{T}(\varphi,a,\varphi') V_{M}^\pi(\varphi') - Q_{M}^{\pi}(\varphi,a) \right|,  \\
\end{split}
\label{key_equ}
\end{equation*}
where $\hat R'(\phi_0(H),a)=\sum_{\varphi \in \phi_0(\mathcal H)} \hat T(\phi_0(H),a,\varphi) \hat R(\phi_0(H),a, \varphi)$.

With $\mathcal R \in [0,R_{max}]$, we notice that $\hat{R'}(\varphi,a) + \gamma \sum_{\varphi' \in \phi(\mathcal H)} \hat{T}(\varphi,a,\varphi') V_{M}^\pi(\varphi')$ 
is the mean of i.i.d. variables bounded in the interval [0,$\frac{R_{max}}{1-\gamma}$] 
and with mean $Q_{M}^{\pi}(\varphi,a)$ for any policy $\pi: \phi(\mathcal H) \rightarrow \mathrm A$.
Therefore, 
according to Hoeffding's inequality \cite{hoeffding1963probability}, we have with $n$ the number of tuples for every pair $(\varphi,a)$:
\begin{small}
\begin{equation}
  \begin{aligned}
\mathbb{P} \Bigg\{ \left| \hat{R'}(\varphi,a) + \gamma \sum_{\varphi' \in \phi(\mathcal H)} \hat{T}(\varphi,a,\varphi') V_{M}^\pi(\varphi')
- Q_{M}^{\pi}(\varphi,a) \right|  > t \Bigg\} \\
\le 2 \exp \left( \frac{-2nt^2}{\left(R_{max}/(1-\gamma)\right)^2} \right).
  \end{aligned}
\label{eq:Hoeffding}
\end{equation}
\end{small}
 
As we want to obtain a bound over all pairs $(\varphi,a)$ and a union bound on all policies $\pi \in \Pi$ s.t. $|\Pi|=|\mathrm A|^{|\phi(\mathcal H)|}$ (indeed Equation \ref{eq:Hoeffding} does not hold for $\pi_{D, \phi}$ alone because that policy is not chosen randomly), we want the right-hand side of equation \ref{eq:Hoeffding} to be $\frac{\delta}{|\Pi| |\phi(\mathcal H)| |\mathrm A|}$. This gives $t(\delta)=\left(\frac{R_{max}}{1-\gamma}\right) \sqrt{\left(\frac{1}{2n}  \ln\left(\frac{2 |\Pi| |\phi(\mathcal H)| |\mathrm A|}{\delta}\right)\right)}$ and we conclude that:

\begin{equation*}
  \begin{split}
  \underset{\varphi \in \phi(\mathcal H)}{\max} & \left( V_M^{\pi_{D_\infty, \phi}}(\varphi)-V_M^{\pi_{D, \phi}}(\varphi)\right) \le \frac{2 R_{max}}{(1-\gamma)^2} \sqrt{ \frac{1}{2n}  ln\left(\frac{2 |\phi(\mathcal H)| |\mathrm A|^{1+|\phi(\mathcal H)|}}{\delta}\right)}
  \end{split}
  \end{equation*}
with probability at least $1-\delta$.

\begin{lemma}
\label{partial}
For any $M=(S,A,T,R,\Omega,O,\gamma)$ and the frequentist-based augmented MDP $\hat{M}=(\Sigma,\mathrm A,\hat T,\hat R,\Gamma)$ defined from $M$ according to definition \ref{augmented_DP}, we have
$\forall \pi: \phi(\mathcal H) \rightarrow A$:

\begin{equation*}
\begin{split}
\underset{\varphi \in \phi(\mathcal H),a \in \mathrm A
 }{\max} \left| Q_{M}^{\pi}(\varphi,a) - \mathcal Q_{\hat M_{D}}^{\pi}(\varphi,a) \right| & \le \frac{1}{1-\gamma} \underset{\varphi \in \phi(\mathcal H),a \in A}{\max} \\
& \left| \hat{R'}(\varphi,a) + \gamma \sum_{\varphi' \in \phi(\mathcal H)} \hat{T}(\varphi,a,\varphi') V_{M}^\pi(\varphi') - Q_{M}^{\pi}(\varphi,a) \right|.
\end{split}
\end{equation*}
\end{lemma}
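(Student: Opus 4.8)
The plan is to run the standard Bellman-residual (simulation-lemma) argument, exploiting the fact that $\mathcal Q_{\hat M_D}^\pi$ is the fixed point of the policy-evaluation Bellman operator of $\hat M_D$. First I would record the two characterizations of the quantities appearing on the left-hand side. On one hand, since $\Gamma=\gamma$ (as assumed in Theorem~\ref{union_bound}) and $\mathcal V_{\hat M_D}^\pi(\varphi')=\mathcal Q_{\hat M_D}^\pi(\varphi',\pi(\varphi'))$ for the deterministic policy $\pi$, the evaluation value satisfies, for every $\varphi\in\phi(\mathcal H)$ and $a\in\mathrm A$,
$$\mathcal Q_{\hat M_D}^\pi(\varphi,a) = \hat R'(\varphi,a) + \gamma \sum_{\varphi' \in \phi(\mathcal H)} \hat T(\varphi,a,\varphi')\, \mathcal Q_{\hat M_D}^\pi(\varphi',\pi(\varphi')).$$
On the other hand, I would introduce the Bellman residual of the true action-value function evaluated under the estimated model,
$$g(\varphi,a) := \hat R'(\varphi,a) + \gamma \sum_{\varphi' \in \phi(\mathcal H)} \hat T(\varphi,a,\varphi')\, V_M^\pi(\varphi') - Q_M^\pi(\varphi,a),$$
so that the right-hand side of the lemma is exactly $\frac{1}{1-\gamma}\max_{\varphi,a}|g(\varphi,a)|$.

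Next I would subtract the two identities. Using the consistency $V_M^\pi(\varphi')=Q_M^\pi(\varphi',\pi(\varphi'))$ between the value and action-value functions of $\pi$ in $M$ (both viewed as functions of the feature $\varphi$), this gives, for all $\varphi,a$,
$$Q_M^\pi(\varphi,a) - \mathcal Q_{\hat M_D}^\pi(\varphi,a) = \gamma \sum_{\varphi'} \hat T(\varphi,a,\varphi')\Big(Q_M^\pi(\varphi',\pi(\varphi')) - \mathcal Q_{\hat M_D}^\pi(\varphi',\pi(\varphi'))\Big) - g(\varphi,a).$$
Writing $e := \max_{\varphi,a}|Q_M^\pi(\varphi,a) - \mathcal Q_{\hat M_D}^\pi(\varphi,a)|$ and using that $\hat T(\varphi,a,\cdot)$ is a probability distribution (nonnegative entries summing to one), the triangle inequality bounds each summand by $e$, yielding the self-referential inequality $e \le \gamma e + \max_{\varphi,a}|g(\varphi,a)|$. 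Rearranging gives $e \le \frac{1}{1-\gamma}\max_{\varphi,a}|g(\varphi,a)|$, which is precisely the claim.

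The contraction step and the algebra are routine; the only point that needs care is the first paragraph's two characterizations, namely that $Q_M^\pi(\varphi,a)$ is well-defined as a function of the feature $\varphi=\phi(H)$ and obeys $V_M^\pi(\varphi)=Q_M^\pi(\varphi,\pi(\varphi))$, and that the discount factors match ($\Gamma=\gamma$). I expect the main, though modest, obstacle to be pinning down this consistency relation cleanly in the partially observable setting, since $Q_M^\pi$ is defined through the real hidden dynamics while $\mathcal Q_{\hat M_D}^\pi$ lives entirely in the estimated feature-space MDP; once both are written as Bellman-type expressions in $\varphi$ sharing the same discount factor, the contraction argument closes the proof.
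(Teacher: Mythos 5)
Your argument is correct and is essentially the paper's own proof: the paper iterates the policy-evaluation Bellman operator of $\hat M_D$ starting from $\mathcal Q_0 = Q^\pi_M$ and sums the geometric series $\sum_m \gamma^{m-1}\lVert \mathcal Q_1-\mathcal Q_0\rVert_\infty$, where $\mathcal Q_1-\mathcal Q_0$ is exactly your residual $g$. Your direct fixed-point inequality $e \le \gamma e + \max_{\varphi,a}|g(\varphi,a)|$ is just a compressed form of the same contraction argument, and the consistency points you flag ($V^\pi_M(\varphi)=Q^\pi_M(\varphi,\pi(\varphi))$ and $\Gamma=\gamma$) are indeed the only things that need care.
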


\begin{proof}
Given any policy $\pi$, let us define $\mathcal Q_0, \mathcal Q_1, ..., \mathcal Q_m$ s.t.
$\mathcal Q_0(\varphi,a)=Q^\pi_M(\varphi,a)$ and \\
$\mathcal Q_m(\varphi, a) = \hat R'(\varphi,a) + \gamma \sum_{\varphi' \in \phi(\mathcal H)} \hat{T}(\varphi,a,\varphi') \mathcal V_{m-1}(\varphi')$, where $\mathcal V_{m-1}(\varphi) = \mathcal Q_{m-1}\left(\varphi, \pi(\varphi)\right)$. We have 
\begin{equation*}
\begin{split}
\left\lVert \mathcal Q_{m} -\mathcal Q_{m-1} \right\lVert_\infty & \le \gamma \underset{\varphi \in \phi(\mathcal H), a \in \mathrm A}{\max} \left| \sum_{\varphi' \in \phi(\mathcal H)} \hat{T}(\varphi,a,\varphi') (\mathcal V_{m-1}-\mathcal V_{m-2})(\varphi')\right| \\
& \le \gamma \left\lVert(\mathcal V_{m-1}-\mathcal V_{m-2})(\varphi)\right\lVert_\infty \\
& \le \gamma \left\lVert(\mathcal Q_{m-1}-\mathcal Q_{m-2})(\varphi,a)\right\lVert_\infty.
\end{split}
\end{equation*}

Taking the limit of $m \rightarrow \infty, \mathcal Q_m \rightarrow \mathcal Q^\pi_{\hat M}$, we have 
$$\left\lVert\mathcal Q^\pi_{\hat M_D} - Q^\pi_M\right\lVert_\infty \le \frac{1}{1-\gamma}\left\lVert Q_1-Q_0\right\lVert_\infty$$
which completes the proof.
\end{proof}

\subsection{Q-learning with Neural Network as a Function Approximator: Technical Details of Figure~\ref{fig:Random_POMDP_NN}} 
\label{app:Q_learn}
The neural network is made up of three intermediate fully connected layers with 20, 50 and 20 neurons with ReLu activation function and is trained with Q-learning. Weights are initialized with a Glorot uniform initializer \cite{glorot2010understanding}. It is trained using a target Q-network with a freeze interval (see \cite{mnih2015human}) of 100 mini-batch gradient descent steps. It uses an RMSprop update rule (learning rate of $0.005$, $\rho=0.9$), mini-batches of size 32 and 20000 mini-batch gradient descent steps.

\section{Algorithmic details}
\subsection{Microgrid Benchmark}
\label{app:microgrid}
\subsubsection{Neural Network Architecture}
\label{infosNN}
The inputs of the neural network architecture are provided by $\phi(H_t)$ (normalized into [0,1]), and the outputs represent the Q-values for each discretized action. 

The neural network processes the time series (when $h_c>2$ and $h_p>2$) thanks to a set of convolutions with 16 filters of $2 \times 1$ with stride 1, followed by a convolution with 16 filters of $2 \times 2$ with stride 1. The combination of the output of the convolutions and the non-time series inputs is then followed by two fully-connected layers with 50 and 20 neurons. The activation function used is the Rectified Linear Unit (ReLU) except for the output layer where no activation function is used. A sketch of the structure of the neural network is provided in Figure~\ref{fig:NN_architecture}.

 \def\layersep{2cm}
 \def\layersepb{1.5cm}
 
 \newcommand{\stack}[4]{
  \foreach \i in {1,...,#1} {
    \draw[convol] #2 ++({0.1*(#1)},{-0.1*(#1)}) ++({-0.1*\i},{0.1*\i}) rectangle +#3;
  }
}

\begin{figure}[ht!]
 \centering
  \resizebox{280px}{!}{
\begin{tikzpicture}[shorten >=1pt,->,draw=black!50, node distance=\layersep]
    \tikzstyle{every pin edge}=[<-,shorten <=1pt]
    \tikzstyle{neuron}=[circle,fill=black!25,minimum size=17pt,inner sep=0pt]
    \tikzstyle{sequence}=[fill=black!25,minimum width=30pt,minimum height=10pt,inner sep=0pt]
    \tikzstyle{convol}=[fill=blue!25,minimum width=25pt,minimum height=30pt,inner sep=0pt]
    \tikzstyle{convol_empty}=[minimum width=30pt,minimum height=30pt,inner sep=0pt]
    \tikzstyle{input neuron}=[neuron, fill=green!50];
    \tikzstyle{input sequence}=[sequence, fill=green!50];
    \tikzstyle{output neuron}=[neuron, fill=red!50];
    \tikzstyle{hidden neuron}=[neuron, fill=blue!25];
    \tikzstyle{annot} = [text width=10em, text centered]

    \foreach \name / \y in {1,...,2}
        \node[input sequence, pin=left:{\large Input \#\y}] (I-\name) at (0,-\y*1.5) {};

        \node[convol_empty] (C1) at (\layersep,-1*1.5) {};
	\stack{3}{(\layersepb,-1.5)}{(0.8,0.4)}{53}
        \node[convol_empty] (C1b) at (\layersep,-2*1.5) {};
	\stack{3}{(\layersepb,-3)}{(0.8,0.4)}{53}

        \node[convol_empty] (C-2) at (2*\layersep,-1.5*1.5) {};
	\stack{3}{(2.3*\layersepb,-0.4-2.25)}{(0.8,0.8)}{53}
	
    \foreach \name / \y in {3}
   	\node[input neuron, pin=left:{\large Input \#\y}] (I-\name) at (2*\layersep,-\y*1.35) {};
    \foreach \name / \y in {3}
   	\node[input neuron, pin=left:{ \vdots }] (I-\name) at (2*\layersep,-\y*1.35) {};

    \foreach \name / \y in {1,...,5}
        \path[yshift=-0.5cm]
            node[hidden neuron] (H-\name) at (3*\layersep,-\y cm) {};

    \foreach \name / \y in {1,...,3}
        \path[yshift=-1.5cm]
            node[hidden neuron] (H2-\name) at (4*\layersep,-\y cm) {};

    \node[output neuron, right of=H2-1, yshift=-0.3cm] (O1) {};
    \node[output neuron, right of=H2-3, yshift=+0.3cm] (O3) {};
    
    \path (I-1) edge (C1);
    \path (I-2) edge (C1b);
    \path (C1) edge (C-2);
    \path (C1b) edge (C-2);

    \foreach \source in {2,...,2}
        \foreach \dest in {1,...,5}
            \path (C-\source) edge (H-\dest);
    \foreach \source in {3,...,3}
        \foreach \dest in {1,...,5}
            \path (I-\source) edge (H-\dest);

    \foreach \source in {1,...,5}
        \foreach \dest in {1,...,3}
            \path (H-\source) edge (H2-\dest);

    \foreach \source in {1,...,3}
        \path (H2-\source) edge (O1);
    \foreach \source in {1,...,3}
        \path (H2-\source) edge (O3);

    \node[annot,above of=H-1, node distance=1.8cm, xshift=0.5*\layersep] (hl) {\large Fully-connected layers};
    \node[annot,above of=C1, node distance=1.5cm, xshift=0.5*\layersep] (convol) {\large Convolutions};
    \node[annot,above of=O1, node distance=2.7cm] (outputs) {\large Outputs};
\end{tikzpicture}
}

 \caption
 [Sketch of the structure of the neural network architecture.]
 {Sketch of the structure of the neural network architecture. 
The neural network processes the time series using a set of convolutional layers. The output of the convolutions and the other inputs are followed by fully-connected layers and the output layer.}
 \label{fig:NN_architecture}
\end{figure}
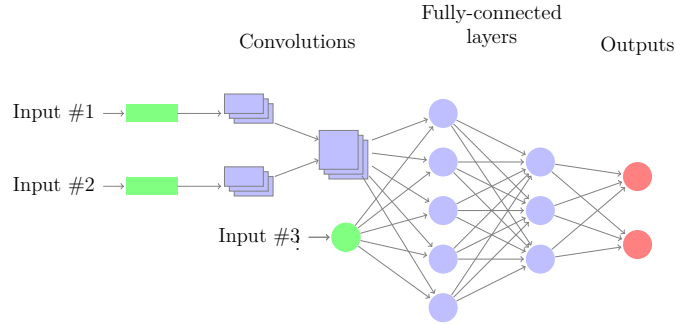

\subsubsection{Hyperparameters Used in DQN}
The different hyperparameters used for the DQN algorithm \cite{mnih2015human} are provided in Table \ref{tab:DQN_params_for_MG}.

\begin{table}[ht!]
\caption
[Main parameters of the DQN algorithm.]
{Main parameters of the DQN algorithm. The integer $k$ refers to the iteration number from the beginning of the training and the integer $l$ refers to the epoch number.}
\centering
\resizebox{0.65\textwidth}{!}{  
\begin{tabular}{| l | r |}
  \hline
  Parameter & \\
  \hline
    Update rule & RMSprop ($\rho_{RMS}=0.9$)\label{ntn:rho_RMS} \cite{rmsprop}\\
    Initial learning rate & $\alpha_0$=0.0002\\
    Learning rate update rule & $\alpha_{l+1}=0.99 \alpha_{l}$\\
    RMS decay & 0.9\\
    Initial discount factor & $\gamma_0$=0.9\\
    Discount factor update rule & $\gamma_{l+1}=\min(0.98,1-0.99 (1-\gamma_l))$\\
    Parameter of the $\epsilon$-greedy\label{ntn:epsilon-greedy} & $\epsilon=max(0.3,1-\frac{k}{500.000})$\\
    Replay memory size & 1000000\\
    Batch size & 32\\
    Freeze interval & 1000 iterations\\
    Initialization of the NN layers & Glorot uniform \cite{glorot2010understanding}\\
  \hline
\end{tabular}
}
\label{tab:DQN_params_for_MG}
\end{table}


\end{document}